\pgfplotsset{
    compat=1.18,
    every mark/.append style={solid},
}
\theoremstyle{plain}
\newtheorem{theorem}{Theorem}%
\newtheorem{proposition}[theorem]{Proposition}
\theoremstyle{definition}
\newtheorem{assumption}{Assumption}
\theoremstyle{remark}
\DeclareMathOperator*{\argmax}{arg\,max}
\newcommand{\dd}{\mathrm{d}}
\newcommand{\algorithmicnotation}{\textbf{notation}}
\newcommand{\algorithmicinput}{\textbf{input}}
\newcommand{\algorithmicoutput}{\textbf{output}}
\newcommand{\algorithmicfunction}{\textbf{function}}
\newcommand{\algorithmicendfunction}{\textbf{end function}}
\newcommand{\INPUT}{\item[\algorithmicinput]}
\newcommand{\OUTPUT}{\item[\algorithmicoutput]}
\newcommand{\NOTATION}{\item[\algorithmicnotation]}
\newcommand{\FUNCTION}{\item[\algorithmicfunction]}
\newcommand{\ENDFUNCTION}{\item[\algorithmicendfunction]}
\icmltitlerunning{Nesting Particle Filters for Experimental Design in Dynamical Systems}
\begin{document}

\setlength{\abovedisplayskip}{5pt}
\setlength{\belowdisplayskip}{5pt}

\setlength{\textfloatsep}{5pt}

\twocolumn[
\icmltitle{Nesting Particle Filters for Experimental Design in Dynamical Systems}

\begin{icmlauthorlist}
\icmlauthor{Sahel Iqbal}{aalto}
\icmlauthor{Adrien Corenflos}{aalto}
\icmlauthor{Simo S\"arkk\"a}{aalto}
\icmlauthor{Hany Abdulsamad}{aalto}
\end{icmlauthorlist}

\icmlaffiliation{aalto}{Aalto University, Espoo, Finland}

\icmlcorrespondingauthor{Sahel Iqbal}{sahel.iqbal@aalto.fi}

\icmlkeywords{Bayesian experimental design, sequential Monte Carlo, non-exchangeable data, amortized inference}

\vskip 0.3in
]

\printAffiliationsAndNotice{}  %

\begin{abstract}
    In this paper, we propose a novel approach to Bayesian experimental design for non-exchangeable data that formulates it as risk-sensitive policy optimization. We develop the Inside-Out SMC\textsuperscript{2} algorithm, a nested sequential Monte Carlo technique to infer optimal designs, and embed it into a particle Markov chain Monte Carlo framework to perform gradient-based policy amortization. Our approach is distinct from other amortized experimental design techniques, as it does not rely on contrastive estimators. Numerical validation on a set of dynamical systems showcases the efficacy of our method in comparison to other state-of-the-art strategies.
\end{abstract}

\section{Introduction}
Traditionally, Bayesian inference on the parameters of a statistical model is performed \emph{after the fact} by employing a posterior elicitation routine to previously gathered data. However, in many scenarios, experimenters can proactively \emph{design} experiments to acquire maximal information about the parameters of interest. Bayesian experimental design~\citep[BED,][]{lindley1956measure, chaloner1995bayesian} offers a principled framework to achieve this goal by maximizing the expected information gain over the design space. BED has found applications in fields as varied as active learning~\citep{bickfordsmith2023prediction}, neuroscience~\citep{shababo2013bayesian}, physics~\citep{mcmichael2021sequential}, psychology~\citep{myung2013tutorial}, and robotics~\citep{schultheis2020receding}. A recent overview of the field of BED can be found in \citet{rainforth2024modern}.

In Bayesian experimental design, we are given a prior $p(\theta)$ and a likelihood $p(x \mid \xi, \theta)$, where $\theta \in \Theta$ is the set of unknown parameters of interest, $x \in \mathcal{X}$ is the experimental outcome, and $\xi \in \Xi$ is a controllable design. The information gain~\citep[IG,][]{lindley1956measure} in a parameter $\theta$ upon applying a design $\xi$ and observing an outcome $x$ is defined as
\begin{equation}\label{eq:ig_single_experiment}
    \mathcal{G}(x, \xi) \coloneq \mathbb{H}[p(\theta)] - \mathbb{H}[p(\theta\mid x, \xi)],
\end{equation}
with $\mathbb{H}[p(\cdot)] \coloneq - \mathbb{E}_{p(\cdot)}[\log p(\cdot)]$ denoting the entropy of a random variable with probability density $p$. Since the outcomes $x$ are themselves random variables for a fixed design $\xi$, the goal in BED is to choose a design $\xi^*$ that maximizes the \emph{expected} information gain~(EIG), defined as
\begin{equation}\label{eq:eig_single_experiment}
    \mathcal{I}(\xi) \coloneq \mathbb{E}_{p(x \mid \xi)}\bigl[\mathbb{H}[p(\theta)] - \mathbb{H}[p(\theta\mid x, \xi)]\bigr],
\end{equation}
where $p(x \mid \xi) = \mathbb{E}_{p(\theta)}[p(x \mid \theta, \xi)]$. The expected information gain thus quantifies the decrease in uncertainty in the unknown variable $\theta$ upon implementing a design $\xi$.

While mathematically elegant, the BED framework involves maximizing nested expectations over intractable quantities such as the marginal likelihood of $x$ and the posterior of $\theta$ appearing in~\eqref{eq:eig_single_experiment}. This is a computationally intensive task~\citep{kueck2009inference, rainforth2018nesting}, which becomes even more challenging when optimizing designs for a series of experiments conducted sequentially, where the impact of each individual design needs to be accounted for across the entire sequence of experiments. This makes the deployment of sequential BED close to impossible on real-time systems with high-frequency data.

\citet{huan2016sequential} addressed this limitation by introducing a parametric policy to predict designs as a function of the running parameter posterior, thereby eliminating the cost of the maximization step in each experiment. \citet{foster2021deep} extended this idea to \textit{amortize} the overall cost of sequential BED by conditioning the policy on the entire outcome-design history and avoiding explicit posterior computation. In that approach, called \textit{Deep Adaptive Design}~(DAD), there is an upfront cost to learning the policy, but experiments can be performed in real-time. While DAD is only applicable for exchangeable models, \textit{implicit DAD}~\citep[iDAD,][]{ivanova2021implicit} generalized the concept to accommodate non-exchangeable models that cover dynamical systems. These methods leverage a lower bound to the EIG known as the sequential Prior Contrastive Estimation~(sPCE) bound. sPCE exhibits significant bias in low-sample regimes, thus requiring a large number of samples for accurate estimates of the EIG~\citep{foster2021deep}. 

In this work, we introduce a novel amortization scheme that circumvents the drawbacks of sPCE. Our approach is rooted in understanding sequential Bayesian experimental design as an adaptive risk-sensitive decision-making process~\citep{whittle1990risk}. We demonstrate that risk-sensitive decision-making can be cast as an inference problem for an equivalent non-Markovian non-linear and non-Gaussian state-space model \cite{toussaint2006probabilistic, rawlik2013probabilistic}. This insight motivates a novel nested sequential Monte Carlo (SMC) algorithm that jointly estimates the EIG and the corresponding optimal designs. We refer to this algorithm as \textit{Inside-Out SMC\textsuperscript{2}}, due to its relation to the SMC$^2$ algorithm of \citet{chopin2013smc}. Finally, we embed our technique within a particle Markov chain Monte Carlo~(pMCMC) algorithm to perform gradient-based optimization of the amortizing policy.

We validate our algorithm on a range of dynamical systems with long experiment sequences, highlighting the computational advantages of our proposed method compared to existing work. The code to reproduce our results is available at \href{https://github.com/Sahel13/InsideOutSMC2.jl}{https://github.com/Sahel13/InsideOutSMC2.jl}.%

\section{Problem Statement}\label{sec:problem_statement}

We are interested in the sequential BED problem for non-exchangeable data, specifically dynamical systems. Accordingly, we assume a scenario of $\, T \in \mathbb{N}$ sequential experiments to infer a parameter vector $\theta$, starting from a prior $p(\theta)$, a Markovian likelihood $f(x_{t+1} \mid x_{t}, \xi_{t}, \theta)$, and an initial distribution $p(x_{0})$, where $t$ indexes the experiment number. We further assume that the designs are sampled from a \textit{stochastic} policy $\pi_\phi(\xi_t \mid z_{0:t})$ parameterized by $\phi$, where we define $z_0 \coloneq \{x_0\}$ and $z_{t} \coloneq \{x_t, \xi_{t-1}\}$ for all $t \geq 1$, and denote the outcome-design history up to time $t$ by $z_{0:t} \coloneq \{x_{0:t}, \xi_{0:t-1}\}$. This yields the following factorization for the joint distribution of outcomes and designs:
\begin{align}\label{eq:joint_density}
    p_{\phi}(z_{0:T} \mid \theta) & = p(z_{0}) \prod_{t=1}^T p_{\phi}(z_{t} \mid z_{0:t-1}, \theta) \\
    & = \nonumber
    \begin{aligned}[t]
        p(x_{0}) & \left\{ \prod_{t=1}^T f(x_t \mid x_{t-1}, \xi_{t-1}, \theta) \right\} \\
        & \times \left\{ \prod_{t=1}^{T} \pi_\phi(\xi_{t-1} \mid z_{0:t-1}) \right\}.
    \end{aligned}
\end{align}

In this setting, the expected information gain can be written analogously to that of the single experiment:
\begin{align}\label{eq:sequential_eig}
    \mathcal{I}(\phi) &\coloneq \mathbb{E}_{p_{\phi}(z_{0:T})} \Bigl[ \mathbb{H}\bigl[p(\theta)\bigr] - \mathbb{H}\bigl[p(\theta \mid z_{0:T})\bigr] \Bigr],
\end{align}
where $p_{\phi}(z_{0:T}) = \mathbb{E}_{p(\theta)} \bigl[p_{\phi}(z_{0:T} \mid \theta)\bigr]$. This definition corresponds to the \textit{terminal reward} framework in literature~\citep[Section 1.8]{foster2021thesis}, as it compares the prior entropy of $\theta$ with the posterior entropy at the end of the experiment sequence. Note that the EIG in~\eqref{eq:sequential_eig} is evaluated under the expectation of the marginal distribution $p_\phi(z_{0:T})$, including the stochastic design policy. The resulting experimental design objective corresponds to finding the optimal policy parameters $\phi^* \coloneq \argmax_\phi \,\, \mathcal{I}(\phi)$. The upcoming sections will present a novel interpretation of this objective in a sequential decision-making framework that leverages the duality with inference techniques to perform policy amortization.

\section{Sequential Bayesian Experimental Design as Probabilistic Inference}\label{sec:bed_as_inference}
To formulate sequential BED as an inference problem, we demonstrate a factorization of the EIG over time steps.
\begin{proposition}\label{prop:eig_factorization}
    For models specified by the joint density in~\eqref{eq:joint_density}, the expected information gain factorizes to
    \begin{equation}\label{eq:eig_factorization}
        \mathcal{I}(\phi) = \mathbb{E}_{p_{\phi}(z_{0:T})} \biggl[ \sum_{t=1}^T r_{t}(z_{0:t}) \biggr],
    \end{equation}
    where $r_{t}(z_{0:t})$ is a stage reward defined as 
    \begin{equation}\label{eq:stage_reward}
        r_{t}(z_{0:t}) = \alpha_{t}(z_{0:t}) + \beta_{t}(z_{0:t}),
    \end{equation}
    with $\alpha_{t}(z_{0:t})$ and $\beta_{t}(z_{0:t})$ defined as
    \begin{align*}
        \!\!\! \alpha_{t}(z_{0:t}) &= \int p(\theta \mid z_{0:t}) \log f(x_t \mid x_{t-1}, \xi_{t-1}, \theta) \, \dd \theta, \\
        \!\!\! \beta_{t}(z_{0:t}) & \! = - \log \! \int \! p(\theta \mid z_{0:t-1}) f(x_t \mid x_{t-1}, \xi_{t-1}, \theta) \, \dd \theta.
    \end{align*}
    Furthermore, for models with additive, constant noise in the dynamics, the EIG can be written as
    \begin{equation}\label{eq:eig_constant_noise}
        \mathcal{I}(\phi) \equiv \mathbb{E}_{p_{\phi}(z_{0:T})} \left[ \sum_{t=1}^T \beta_{t}(z_{0:t}) \right],
    \end{equation}
    where '$\equiv$' denotes equality up to an additive constant.
\end{proposition}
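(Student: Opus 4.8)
The plan is to start from the mutual-information form of the EIG and telescope the log-posterior increments over the experiment index. First I would note that $\mathbb{H}[p(\theta)]$ is constant in $z_{0:T}$ and use the disintegration $p_{\phi}(z_{0:T})\,p(\theta\mid z_{0:T}) = p(\theta)\,p_{\phi}(z_{0:T}\mid\theta)$ to rewrite
\begin{equation}
    \mathcal{I}(\phi) = \mathbb{E}_{p_{\phi}(\theta, z_{0:T})}\!\left[\log \frac{p(\theta\mid z_{0:T})}{p(\theta)}\right].
\end{equation}
Since $z_0 = \{x_0\}$ and $p(x_0)$ does not depend on $\theta$, we have $p(\theta\mid z_0) = p(\theta)$, so the integrand telescopes along the experiment index:
\begin{equation}
    \log\frac{p(\theta\mid z_{0:T})}{p(\theta)} = \sum_{t=1}^{T}\Bigl[\log p(\theta\mid z_{0:t}) - \log p(\theta\mid z_{0:t-1})\Bigr].
\end{equation}

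Next I would expand each increment with Bayes' rule. Because the policy $\pi_\phi(\xi_{t-1}\mid z_{0:t-1})$ depends on $\theta$ only through $z_{0:t-1}$, conditioning additionally on $\xi_{t-1}$ does not change the posterior, i.e. $p(\theta\mid z_{0:t-1},\xi_{t-1}) = p(\theta\mid z_{0:t-1})$; combined with the Markovian likelihood $f$ this gives
\begin{equation}
    \log p(\theta\mid z_{0:t}) - \log p(\theta\mid z_{0:t-1}) = \log f(x_t\mid x_{t-1},\xi_{t-1},\theta) - \log p(x_t\mid z_{0:t-1},\xi_{t-1}),
\end{equation}
with normaliser $p(x_t\mid z_{0:t-1},\xi_{t-1}) = \int p(\theta\mid z_{0:t-1}) f(x_t\mid x_{t-1},\xi_{t-1},\theta)\,\dd\theta$. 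Taking $\mathbb{E}_{p_{\phi}(\theta,z_{0:T})}$ of the $t$-th term, I would use the tower rule to integrate out the future $z_{t+1:T}$ and then disintegrate $p_{\phi}(\theta, z_{0:t}) = p_{\phi}(z_{0:t})\,p(\theta\mid z_{0:t})$: the first summand integrated against $p(\theta\mid z_{0:t})$ is exactly $\alpha_t(z_{0:t})$, while the second summand is $\theta$-free so its conditional expectation equals itself, namely $\beta_t(z_{0:t})$. Summing over $t$ and recollecting the single outer expectation $\mathbb{E}_{p_{\phi}(z_{0:T})}$ yields \eqref{eq:eig_factorization}.

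For the additive-constant-noise statement I would argue that $\mathbb{E}_{p_{\phi}(z_{0:T})}\bigl[\sum_t \alpha_t(z_{0:t})\bigr]$ does not depend on $\phi$. Writing $\mathbb{E}_{p_{\phi}(z_{0:t})}[\alpha_t(z_{0:t})] = \mathbb{E}_{p_{\phi}(\theta,z_{0:t})}[\log f(x_t\mid x_{t-1},\xi_{t-1},\theta)]$ and conditioning on $(\theta, z_{0:t-1}, \xi_{t-1})$, the inner expectation over $x_t\sim f(\cdot\mid x_{t-1},\xi_{t-1},\theta)$ equals $-\mathbb{H}[f(\cdot\mid x_{t-1},\xi_{t-1},\theta)]$. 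When the dynamics are $x_t = (\text{a function of } x_{t-1},\xi_{t-1},\theta) + \varepsilon_t$ with $\varepsilon_t$ drawn from a fixed noise law, translation-invariance of differential entropy makes this equal to the (constant) entropy of the noise, independent of $(\theta,x_{t-1},\xi_{t-1})$ and of $\phi$; hence $\mathbb{E}_{p_{\phi}(z_{0:T})}\bigl[\sum_t\alpha_t\bigr]$ is a fixed constant and \eqref{eq:eig_constant_noise} follows.

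The main obstacle is the careful bookkeeping of conditional independences in this non-exchangeable, non-Markovian construction: one must justify that the policy factors cancel between $p_{\phi}(z_{0:T}\mid\theta)$ and $p_{\phi}(z_{0:T})$ — equivalently that $p(\theta\mid z_{0:t-1},\xi_{t-1}) = p(\theta\mid z_{0:t-1})$ — and that interchanging the finite sum, the telescoping, and the nested expectations is legitimate (finiteness of each $\alpha_t$, $\beta_t$ along the support of $p_\phi$), so that the increments may be taken in expectation one step at a time. Everything else reduces to routine applications of Bayes' rule and the tower property.
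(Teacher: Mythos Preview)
Your proof is correct and complete, but it follows a genuinely different organisation than the paper's. The paper first passes to the marginal-likelihood form
\[
\mathcal{I}(\phi)=\mathbb{E}_{p_\phi(\theta,z_{0:T})}\bigl[\log p_\phi(z_{0:T}\mid\theta)\bigr]-\mathbb{E}_{p_\phi(z_{0:T})}\bigl[\log p_\phi(z_{0:T})\bigr]\eqqcolon T_1-T_2,
\]
then fully expands each of $T_1$ and $T_2$ via the product factorisation~\eqref{eq:joint_density}, so that the policy factors $\pi_\phi(\xi_{t-1}\mid z_{0:t-1})$ and the initial term $p(x_0)$ appear explicitly in both and are seen to cancel; the $\alpha_t$'s are then extracted from $T_1$ by a chain of expectations and the $\beta_t$'s from $T_2$. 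You instead stay in the posterior-ratio form and telescope $\log p(\theta\mid z_{0:t})-\log p(\theta\mid z_{0:t-1})$, applying a single one-step Bayes update per increment; the policy cancellation happens locally inside that update (your observation $p(\theta\mid z_{0:t-1},\xi_{t-1})=p(\theta\mid z_{0:t-1})$), and $\alpha_t,\beta_t$ drop out simultaneously from the same increment. Your route is shorter and keeps the argument ``local'' in $t$; the paper's route makes the global cancellation of the policy and initial-state contributions more visible, which is perhaps pedagogically helpful given the non-Markovian structure. The additive-constant-noise part is essentially identical in both arguments.
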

The proof is given in Appendix~\ref{app:prop_1_proof}. Written in this form, the expected information gain resembles the expected total reward of a discrete-time, finite-horizon, non-Markovian decision-making problem~\citep{puterman2014markov} with a stage reward $r_{t}(z_{0:t})$ that captures the information content regarding the unknown parameters $\theta$. We will now use this factorization of $\mathcal{I}(\phi)$ to derive a risk-sensitive objective and a dual inference perspective, leading to a novel amortized BED learning scheme.

\subsection{The Dual Inference Problem}
To leverage the duality between risk-sensitive decision-making and inference, we follow the formulation of \citet{toussaint2006probabilistic} and \citet{rawlik2013probabilistic}, and introduce the potential function
\begin{equation}\label{eq:potential-function}
    g_{t}(z_{0:t}) \coloneq \exp \Big\{ \eta \, r_{t}(z_{0:t}) \Big\},
\end{equation}
with $\eta \in \mathbb{R}_{>0}$. If we define the potential of an entire trajectory, $g_{1:T}(z_{0:T})$, to be the product of the potential functions over time steps, then
\begin{equation}
    \log g_{1:T}(z_{0:T}) = \sum_{t=1}^T \log g_{t}(z_{0:t}) \! = \eta \left[ \sum_{t=1}^T r_{t}(z_{0:t}) \right],
\end{equation}
is the total reward of a trajectory scaled by $\eta$. In this context, the potentials $g_{1:T}$ play the role of an un-normalized pseudo-likelihood proportional to the probability of the trajectory $z_{0:T}$ being optimal~\citep{dayan1997using}. This perspective allows us to define a non-Markovian state-space model characterized, for $t=0, \ldots, T$, by the following joint density
\begin{equation}\label{eq:pathwise_smoothing_trajectory}
    \!\!\!\! \Gamma_t(z_{0:t}; \phi) = \frac{1}{Z_t(\phi)} \, p(z_0) \prod_{s=1}^t p_\phi(z_s \mid z_{0:s-1}) \, g_s(z_{0:s}),\!\!
\end{equation}
where $p_\phi(z_t \mid z_{0:t-1})$ are the marginal dynamics under the running filtered posterior $p(\theta \mid z_{0:t-1})$
\begin{equation}\label{eq:marginal_dynamics}
    \!\!\! p_\phi(z_t \mid z_{0:t-1}) \!=\! \int \! p_\phi(z_t \mid z_{0:t-1}, \theta) \, p(\theta \mid z_{0:t-1}) \, \dd \theta,
\end{equation}
and $Z_t(\phi)$ is the normalizing constant
\begin{equation}
    Z_t(\phi) = \int g_{1:t}(z_{0:t}) \, p_\phi(z_{0:t}) \, \dd z_{0:t}.
\end{equation}
For ease of exposition, we will henceforth refer to $Z_T(\phi)$ as the marginal likelihood of \emph{being optimal}, even though it may not represent any meaningful probability.

The duality principle becomes evident when we apply Jensen's inequality to show that the log marginal likelihood is an upper bound on the EIG scaled by $\eta$:
\begin{subequations}\label{eq:log_marginal_jensen}
    \begin{align}
        \log Z_T(\phi) &= \log \mathbb{E}_{p_\phi(z_{0:T})} \left[g_{1:T}(z_{0:T})\right] \\
        &\geq \mathbb{E}_{p_\phi(z_{0:T})} \left[\log g_{1:T}(z_{0:T})\right] \\
        &= \eta \, \mathcal{I}(\phi).
    \end{align}
\end{subequations}
Hence, maximizing the marginal likelihood is equivalent to maximizing a risk-sensitive EIG objective, which we denote as $\mathcal{I}_{\eta}(\phi)$~\citep{marcus1997, rawlik2013probabilistic},
\begin{equation}\label{eq:risk_sensitive_eig}
     \mathcal{I}_{\eta}(\phi) = \frac{1}{\eta} \log \mathbb{E}_{p_\phi(z_{0:T})} \left[\exp \left\{\eta \sum_{t=1}^T r_t(z_{0:t})\right\}\right].
\end{equation}
Note that $\eta$ modulates the bias-variance trade-off of this objective. This aspect is revealed by considering a first-order expansion of the objective around $\eta = 0$
\begin{equation}
    \mathcal{I}_{\eta}(\phi) \approx \mathbb{E} \left[\sum_{t=1}^T r_t(z_{0:t})\right] + \frac{\eta}{2} \mathbb{V} \left[ \sum_{t=1}^T r_t(z_{0:t}) \right],
\end{equation}
where the expectation and variance operators are in relation to $p_\phi(z_{0:T})$. Note that in the limit $\eta \rightarrow 0$, we recover the risk-neutral EIG objective from \eqref{eq:eig_factorization}. 

The choice of a positive tempering parameter $\eta \in \mathbb{R}_{>0}$ leads to a risk-seeking objective that incentivizes exploration during policy amortization. This is compatible with the heuristic of \emph{optimism in the face of uncertainty}, widely adopted in stochastic optimization settings~\citep{neu2020unifying}.

In this section, we formalized the connection between a risk-sensitive sequential BED objective and inference in an equivalent non-Markovian state-space model. In the following, we leverage this insight to formulate a gradient-based policy optimization technique within a particle MCMC framework.

\subsection{Amortization as Likelihood Maximization}\label{sec:mle}
The duality principle demonstrated in Section~\ref{sec:bed_as_inference} enables us to view amortized BED from an inference-centric perspective and to frame policy optimization in terms of maximum likelihood estimation~(MLE) in a non-Markovian, nonlinear, and non-Gaussian state-space model, as specified by~\eqref{eq:pathwise_smoothing_trajectory}. Following the literature on particle methods for MLE~\citep{kantas2015particle}, we employ a stochastic gradient ascent algorithm~\citep{robbins1951stochastic}.

To obtain the derivative of the log marginal likelihood, $\mathcal{S}(\phi) \coloneq \nabla_\phi \log Z_T(\phi)$, also known as the score function, we make use of Fisher's identity~\citep{cappe2005inference},
\begin{align*}
     \mathcal{S}(\phi) &= \int \nabla_\phi \log \tilde{\Gamma}_T(z_{0:T}; \phi) \, \Gamma_T(z_{0:T}; \phi) \, \dd z_{0:T} \\
    &= \int \nabla_\phi \log p_\phi(z_{0:T}) \, \Gamma_T(z_{0:T}; \phi) \, \dd z_{0:T}.
\end{align*}
where we define $\tilde{\Gamma}_T(z_{0:T}; \phi) \coloneq p_\phi(z_{0:T}) \, g_{1:T}(z_{0:T})$ to be the un-normalized density from~\eqref{eq:pathwise_smoothing_trajectory}. This identity provides a Monte Carlo estimate of the score $\hat{\mathcal{S}}(\phi)$ given samples from $\Gamma_T(\cdot; \phi)$. It is well-known that computing this expectation naively by first sampling from $p_\phi(z_{0:T})$ and then weighting the samples by $g$ results in very high variance estimates~\citep[see, e.g., in][Section 3.3]{doucet2009tutorial}. Alternatively, a lower variance estimate can be achieved by drawing approximate samples from $\Gamma_T(\cdot; \phi)$ via particle smoothing, yielding consistent, albeit biased, estimates of expectations under the smoothing distribution for a finite sample size~\citep[Chapter 12]{chopin2020introduction}.

Another alternative is to use Markovian score climbing~\citep[MSC,][]{gu1998stochastic, naesseth2020}, see Algorithm~\ref{alg:msc}. MSC uses a $\Gamma_T(\cdot; \phi)$-ergodic Markov chain Monte Carlo \citep[MCMC, see, e.g.,][for a review and definition]{brooks2011handbook} kernel, $\mathcal{K}_{\phi}(\cdot \mid z_{0:T})$, to compute a Monte Carlo estimate of the score. Contrary to simply using particle smoother approximations within a gradient ascent procedure~\citep[see][Section 5]{kantas2015particle}, Algorithm~\ref{alg:msc} is guaranteed to converge to a local optimum of the marginal likelihood~\citep[Proposition 1]{naesseth2020}.

\begin{algorithm}[t]
    \setstretch{1.1}
    \caption{Markovian score climbing}
    \label{alg:msc}
    \begin{algorithmic}[1]
        \INPUT{Initial trajectory $z_{0:T}^0$, initial parameters $\phi_0$, step size sequence $\{\gamma_i\}_{i \in \mathbb{N}}$, Markov kernel $\mathcal{K}$.}
        \OUTPUT{Local optimum $\phi^*$ of the marginal likelihood.}
        \STATE $k \gets 1$
        \WHILE{not converged}
            \STATE Sample $z_{0:T}^k \sim \mathcal{K}_{\phi_{k-1}}(\cdot \mid z_{0:T}^{k-1})$ 
            \STATE Compute $\hat{\mathcal{S}}(\phi_{k-1}) \gets \nabla_\phi \log p_\phi(z^{k}_{0:T}) \vert_{\phi=\phi_{k-1}}$ 
            \STATE Update $\phi_k \gets \phi_{k-1} + \gamma_k \, \hat{\mathcal{S}}(\phi_{k-1})$
            \STATE $k \gets k + 1$
        \ENDWHILE
        \STATE \textbf{return} $\phi_k$
    \end{algorithmic}
\end{algorithm}

In this work, we construct the MCMC kernel $\mathcal{K}_\phi$ as a variant of the conditional sequential Monte Carlo~(CSMC) kernel~\citep{andrieu2010particle}, namely the Rao--Blackwellized CSMC kernel~\citep{Olsson2017Paris, cardoso2023STATE, abdulsamad2023risk}. 
In practice, CSMC (as well as its Rao--Blackwellized modifications aforementioned) can be implemented as a simple modification to a particle filter representation of the smoothing distribution~\citep[Section 4.1]{kitagawa1996montecarlo}. 
In the next section, we describe how such a particle filter can be implemented for~\eqref{eq:pathwise_smoothing_trajectory}, and, for the sake of brevity, we defer the full description of the conditional version to Appendix~\ref{sec:csmc}.

\section{Inside-Out \texorpdfstring{SMC\textsuperscript{2}}{SMC²}}\label{sec:io_smc}

\subsection{Approximating the Filtered Posterior}\label{subsec:param}
A \emph{bootstrap} particle filter samples particles from the transition density and weights them using the potential function~\citep[][Section 10.3]{chopin2020introduction}. In our non-Markovian model, this would imply sampling from the marginal dynamics $p_\phi(z_t \mid z_{0:t-1})$ in \eqref{eq:marginal_dynamics}, and evaluating the potential function $g_t(z_{0:t})$ in \eqref{eq:potential-function}. Both steps require computing the filtered posterior $p(\theta \mid z_{0:t-1})$. Fortunately, for models of the form
\begin{equation*}
    \theta \sim p(\theta), \quad z_t \sim p_\phi(z_t \mid z_{0:t-1}, \theta), \quad t \geq 1,
\end{equation*}
the iterated batch importance sampling (IBIS) algorithm of \citet{chopin2002sequential} can be used to generate weighted Monte Carlo samples $\{\theta^m_t\}_{m=1}^M \eqqcolon \theta^{1:M}_t$ that are approximately distributed according to $p(\theta \mid z_{0:t})$ at each time step using a specialized particle filtering procedure. We summarize a single step of the method in Algorithm~\ref{alg:ibis}.

\begin{algorithm}[t]
    \caption{Single step of IBIS}
    \label{alg:ibis}
    \begin{algorithmic}[1]
    \NOTATION Any operation with superscript $m$ is to be understood as performed for all $m = 1, \dots, M$.
    \FUNCTION{\textsc{Ibis\_Step}}$(z_{0:t}, \theta^{1:M}, W^{1:M})$
        \STATE Compute $v_t(\theta^m) = p_\phi(z_t \mid z_{0:t-1}, \theta^m)$.
        \STATE Reweight: $W^m \propto W^m v_t(\theta^m)$. \label{line:reweight-ibis}
        \IF{some degeneracy criterion is fulfilled}
            \STATE Resample: $a_t^m \sim \mathcal{M}(W^{1:M})$. \label{ibis:resample}
            \STATE Move: $\tilde{\theta}^m \sim Q_t(\theta^{a_t^m}, \cdot)$. \label{ibis:move}
            \STATE Replace the current set of weighted particles with
            $(\theta^m, W^m) \gets (\tilde{\theta}^m, 1 / M)$. \label{ibis:replace}
        \ENDIF
        \STATE \textbf{return} $\{\theta^m, W^m\}_{m=1}^M$
    \ENDFUNCTION
    \end{algorithmic}
\end{algorithm}

In Algorithm~\ref{alg:ibis}, $\mathcal{M}(W^{1:M})$ denotes multinomial sampling using the normalized weights $W^{1:M}$, and $Q_t$ is a $p(\theta \mid z_{0:t})$-ergodic Markov chain Monte Carlo kernel. If a degeneracy criterion is met, IBIS employs a resample-move step~\citep[lines~\ref{ibis:resample}--\ref{ibis:replace} in Algorithm~\ref{alg:ibis}, see][]{gilks2001following} to rejuvenate particles using the Markov kernel $Q_t$. A standard degeneracy measure, which we use in this work, is given by the effective sample size~(ESS)~\citep{liu1995blind} of the particle representation computed as $\textrm{ESS} = 1/\sum_{m=1}^M (W^m)^2$. The ESS roughly corresponds to the number of equivalent independent samples one would need to compute integrals with the same precision. The resample-move step is then triggered if the ESS falls below a chosen fraction of the total particles $M$, taken to be $75\%$ in this work. Details on the choice of the Markov kernel $Q_t$ are given in Appendix~\ref{sec:ibis_kernel}.

\subsection{The Inside-Out \texorpdfstring{SMC\textsuperscript{2}}{SMC²} Algorithm}\label{subsec:algorithm}
\citet{chopin2004central} showed that, for integrable functions $\psi$, $\sum_{m=1}^M W^m \, \psi(\theta^m)$ is a consistent and asymptotically (as $M \to \infty$) normal estimator of the integral $\int \psi(\theta) \, p(\theta \mid z_{0:t}) \, \dd z_{0:t}$.
Therefore, a natural solution to perform inference in~\eqref{eq:pathwise_smoothing_trajectory} is to use IBIS within a standard particle filter. This idea is similar to the SMC\textsuperscript{2} algorithm of \citet{chopin2013smc} which can be seen as a particle filter within IBIS targeting the distribution $p(\theta \mid y_{1:t})$ for a state-space model with noisy observations $y_{1:t}$. We thus call our algorithm \textit{Inside-Out SMC$^2$}, which we reproduce in Algorithm~\ref{alg:i-o-smc2}, with $N$ and $M$ denoting the numbers of samples of $z$ and $\theta$ respectively. Any operation therein with superscripts $m$ or $n$ is to be understood as performed for every $m = 1, \dots, M$ and $n = 1, \dots, N$, and we use an upper script dot ${u}^{\bullet n}$ to denote the collection $\{{u}^{m n}\}_{m=1}^M$.

At a time step $t$ and given a trajectory $z_{0:t}^n$, IBIS approximates the $\theta$-posterior with weighted samples $\{\theta_t^{\bullet n}, W^{\bullet n}_{t, \theta}\}$ $$p(\theta \mid z_{0:t}^n) \approx \hat{p}(\theta \mid z_{0:t}^n) \coloneq \sum_{m=1}^M W_{t,\theta}^{mn} \, \delta_{\theta_t^{mn}}(\theta),$$ where $\delta$ is the Dirac delta function. We can then form an approximation to the marginal dynamics as follows
\begin{align*}
    \hat{p}(x_{t+1} \mid z_{0:t}^n, \xi_t) &= \int f(x_{t+1} \mid x_t^n, \xi_t, \theta) \, \hat{p}(\theta \mid z_{0:t}^n) \, \dd\theta \\
    &= \sum_{m=1}^M W_{t,\theta}^{mn} \, f(x_{t+1} \mid x_t^n, \xi_t, \theta_t^{mn}),
\end{align*}
and consequently, for the augmented state as
\begin{equation*}
    \hat{p}_\phi(z_{t+1} \mid z_{0:t}^n) = \hat{p}(x_{t+1} \mid z_{0:t}^n, \xi_t) \, \pi_\phi(\xi_t \mid z_{0:t}^n).
\end{equation*}

If the Markovian density $f(x_t \mid x_{t-1}, \xi_{t-1}, \theta)$ is conditionally linear in the parameters and conjugate to the prior $p(\theta)$, we can compute the posterior in closed form, and therefore the marginal dynamics as well. In this case, we do not need IBIS, and this significantly reduces the computational complexity of Algorithm~\ref{alg:i-o-smc2}~(see Appendix~\ref{sec:conditionally_linear}).

Note that, for the sake of clarity, in Algorithm~\ref{alg:i-o-smc2} and in its analysis in Section~\ref{sec:target_distribution}, we consider the case when the dynamics have constant noise, corresponding to the stage reward in~\eqref{eq:eig_constant_noise}. The modification to Algorithm~\ref{alg:i-o-smc2} for the more general case is straightforward, only requiring a modified weight function, as detailed in Section~\ref{sec:general_weight_function}.

\begin{algorithm}[t]
    \setstretch{1.12}
    \caption{Inside-Out SMC\textsuperscript{2}}
    \label{alg:i-o-smc2}
    \begin{algorithmic}[1]
        \STATE Sample $z_0^n \sim p(z_0)$, $\theta_0^{mn} \sim p(\theta)$, set $W_{0, \theta}^{mn} \gets 1/M$.
        \STATE Sample $z_1^n \sim \hat{p}_\phi(\cdot \mid z_0^n)$ initialize the state history $z_{0:1}^n \gets (z_0^n, z_1^n)$.
        \STATE Compute and normalize the weights
            \vspace{-0.2cm} $$\smash{W_z^n \propto \exp \Big \{- \eta \log \hat{p}(x_1^n \mid x_0^n, \xi_0^n) \Big\}.}$$ \vspace{-0.6cm}
        \FOR{$t \gets 1, \dots, T - 1$}
            \STATE Sample $b_t^n \sim \mathcal{M}(W_z^{1:N})$. \label{line:resampling}
            \STATE $\theta_t^{\bullet n}, W^{\bullet n}_{t, \theta} \gets \textsc{Ibis\_Step}(z_{0:t}^{b_t^n}, \theta_{t-1}^{\bullet b_t^n}, W^{\bullet b_t^n}_{t - 1, \theta})$ \label{line:ibis}
            \label{line:mixture_distribution}
            \STATE Sample
                $z_{t+1}^n \sim \hat{p}_\phi(\cdot \mid z_{0:t}^{b_t^n}),$
                and append to state history
                $z_{0:t+1}^n \gets [z_{0:t}^{b_t^n}, z_{t+1}^n]$. \label{line:marg_sample_2}
            \STATE Compute and normalize the weights
                \vspace{-0.2cm} $$\smash{W_z^n \propto \exp\left\{- \eta \log \hat{p}(x_{t+1}^n \mid z_{0:t}^{b_t^n}, \xi_t^n)\right\}.}$$ \label{line:reweight_2} \vspace{-0.6cm}\label{line:smc2-reweight}
        \ENDFOR
        \STATE \textbf{return} $\{z_{0:T}^n, W_z^n\}_{n=1}^N$.
    \end{algorithmic}
\end{algorithm}
\vspace{-0.5em}

\subsection{Target Distribution of Inside-Out SMC\textsuperscript{2}}\label{sec:target_distribution}
We now show that the nested particle filter introduced in the previous section asymptotically targets the correct distribution. Similarly to \citet[Proposition 1]{chopin2013smc}, Algorithm~\ref{alg:i-o-smc2} is a particle filter targeting a particle filter. Indeed, dropping the $n$ indices and the explicit dependence on $\phi$, let $\Gamma_t^M(z_{0:t}, \theta_{0:t-1}^{1:M}, a_{1:t-1}^{1:M})$ denote the distribution of all stochastic variables generated by an instance of the inner IBIS at line~\ref{line:ibis} in Algorithm~\ref{alg:i-o-smc2}. We first note that
\begin{equation*}
  \Gamma_0^M(z_0) = p(z_0), \quad \Gamma^M_0(z_0, \theta_0^{1:M}) = p(z_0) \prod_{m=1}^M p(\theta_0^m).
\end{equation*}
Let us break down the ratio of the distributions over successive iterations as
\begin{multline}\label{eq:smc2-distribution-ratio}
    \frac{\Gamma^M_{t+1}(z_{0:t+1}, \theta_{0:t}^{1:M}, a_{1:t}^{1:M})}{\Gamma^M_t(z_{0:t}, \theta_{0:t-1}^{1:M}, a_{1:t-1}^{1:M})} = \frac{\Gamma^M_{t+1}(z_{0:t+1}, \theta_{0:t}^{1:M}, a_{1:t}^{1:M})}{\Gamma^M_{t}(z_{0:t}, \theta_{0:t}^{1:M}, a_{1:t}^{1:M})} \\ 
        \times \frac{\Gamma^M_{t}(z_{0:t}, \theta_{0:t}^{1:M}, a_{1:t}^{1:M})}{\Gamma^M_t(z_{0:t}, \theta_{0:t-1}^{1:M}, a_{1:t-1}^{1:M})}.
\end{multline}
The second fraction in~\eqref{eq:smc2-distribution-ratio} the IBIS rejuvenation step:
\begin{equation}\label{eq:ibis}
  \!\!\!\!\frac{\Gamma_{t}^M(z_{0:t}, \theta_{0:t}^{1:M}, a_{1:t}^{1:M})}{\Gamma^M_t(z_{0:t}, \theta_{0:t-1}^{1:M}, a_{1:t-1}^{1:M})} = \prod_{m=1}^M W_{t,\theta}^{a_t^m} \,Q_t(\theta_{t-1}^{a_t^m}, \theta_t^m),
\end{equation}
where the normalized weights (of the $\theta$ particles) $W_{t,\theta}^m$ are %
\begin{equation*}
  W_{t,\theta}^m = \frac{v_t^m}{\sum_{m=1}^M v_t^m}, \quad v_t^m = p_\phi(z_t \mid z_{0:t-1}, \theta_{t-1}^m),
\end{equation*}
and we have assumed multinomial resampling at every time step. The first fraction in~\eqref{eq:smc2-distribution-ratio} is the trajectory update step:
\begin{multline}\label{eq:prediction}
  \!\!\frac{\Gamma^M_{t+1}(z_{0:t+1}, \theta_{0:t}^{1:M}, a_{1:t}^{1:M})}{\Gamma^M_{t}(z_{0:t}, \theta_{0:t}^{1:M}, a_{1:t}^{1:M})} \propto \frac{1}{M} \sum_{m=1}^M p_\phi(z_{t+1} \mid z_{0:t}, \theta_{t}^m) \\ \times \exp\left\{-\eta \log \frac{1}{M} \sum_{m=1}^M f(x_{t+1} \mid x_t, \xi_t, \theta_{t}^m)\right\}.
\end{multline}

The following proposition, akin to the law of large numbers and proven in Appendix~\ref{app:consistency}, ensures that Algorithm~\ref{alg:i-o-smc2} asymptotically targets the correct distribution.

\begin{proposition}[Consistency of the target distribution]\label{prop:consistency}
    Let $\tilde{\Gamma}^M_t(z_{0:t}, \theta_t) = \mathbb{E}\left[\Gamma_t^M(z_{0:t}, \theta_{t}^{1:M}, a_{1:t}^{1:M}) \right]$ be the joint expected empirical distribution\footnote{Note that $\Gamma_t^M(z_{0:t}, \theta_{t}^{1:M}, a_{1:t}^{1:M})$ is a random measure and hence this expectation is a measure.} over $(z_{0:t}, \theta_t)$ taken by integrating over $a_{1:t}^{1:M}$ and $\theta_{0:t}^{1:M}$. Under technical conditions listed in Appendix~\ref{app:consistency}, as $M \to \infty$, empirical expectations under $\tilde{\Gamma}^M_t$ converge to expectations under $\Gamma_t(z_{0:t}, \theta_t) \coloneq \Gamma_t(z_{0:t}) \, p(\theta_t \mid z_{0:t})$. That is, for any bounded test function $\psi(z_{0:t}, \theta_t)$, we have
    \begin{equation}
        \mathbb{E}_{\tilde{\Gamma}^M_t}\left[\psi(z_{0:t}, \theta_t)\right] \to \mathbb{E}_{\Gamma_t}\left[\psi(z_{0:t}, \theta_t)\right]
    \end{equation}
    almost surely.
\end{proposition}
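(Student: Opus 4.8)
The plan is to prove the statement by induction on the experiment index $t$, following the strategy used for SMC$^2$ in \citet[Proposition~1]{chopin2013smc} but replacing the exactness argument there with a law-of-large-numbers argument whose new ingredient is the consistency of IBIS \citep{chopin2002sequential, chopin2004central}. The object we induct on is $\Gamma(z_{0:t},\theta_t) = \Gamma_t(z_{0:t})\, p(\theta_t \mid z_{0:t})$, where $\Gamma_t$ is the time-$t$ marginal of the path measure in~\eqref{eq:pathwise_smoothing_trajectory}, i.e.\ the Feynman--Kac flow with transitions $p_\phi(z_s \mid z_{0:s-1})$ and potentials $g_s$ from~\eqref{eq:potential-function}; this is the right target because, conditionally on a trajectory $z_{0:t}$, the information carried by the inner IBIS population is precisely the Bayes posterior $p(\theta \mid z_{0:t})$. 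The base case is exact: at $t = 0$ we draw $z_0 \sim p(z_0)$ and $\theta_0^{1:M} \sim p(\theta)^{\otimes M}$, so $\tilde\Gamma_0^M(z_0,\theta_0) = p(z_0)\, p(\theta_0) = \Gamma(z_0,\theta_0)$ because $z_0$ is uninformative for $\theta$; the initialization at $t=1$ in Algorithm~\ref{alg:i-o-smc2} is then just the first instance of the generic inductive step.

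For the inductive step I would reuse the ratio decomposition already recorded in~\eqref{eq:ibis}--\eqref{eq:prediction}, which splits the passage from time $t$ to time $t+1$ into (i) an inner IBIS reweight--move acting on the $\theta$ particles, and (ii) an outer bootstrap-type update acting on the trajectory (propose $z_{t+1} \sim \hat p_\phi(\cdot \mid z_{0:t})$, weight by the IBIS-estimated potential $\widehat g_{t+1}$, then resample). For (i), the IBIS update of \citet{chopin2002sequential} preserves $p(\theta \mid z_{0:t})$ as the target of the $\theta$-population; combined with the induction hypothesis and \citet{chopin2004central}, this gives, almost surely as $M \to \infty$, $\hat p(x_{t+1} \mid z_{0:t},\xi_t) = \sum_{m} W_{t,\theta}^{mn}\, f(x_{t+1} \mid x_t^n,\xi_t,\theta_t^{mn}) \to p_\phi(x_{t+1} \mid z_{0:t},\xi_t)$, hence $\hat p_\phi(z_{t+1} \mid z_{0:t}) \to p_\phi(z_{t+1} \mid z_{0:t})$ and, for constant-noise models, $\widehat g_{t+1} \to g_{t+1}$. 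For (ii), since $p_\phi(z_{t+1} \mid z_{0:t})\, g_{t+1}(z_{0:t+1})$ is by construction the unnormalized increment carrying $\Gamma_t$ to $\Gamma_{t+1}$, I would apply the convergence from (i) separately to the integrated numerator $\mathbb{E}_{\tilde\Gamma_t^M}[\hat p_\phi(z_{t+1}\mid z_{0:t})\,\widehat g_{t+1}(z_{0:t+1})\,\psi]$ and to the normalizer $\mathbb{E}_{\tilde\Gamma_t^M}[\widehat g_{t+1}]$, and then divide (the ratio of almost-surely convergent quantities tends to the ratio of the limits, since the limiting denominator is positive), to obtain $\mathbb{E}_{\tilde\Gamma_{t+1}^M}[\psi] \to \mathbb{E}_{\Gamma(z_{0:t+1},\theta_{t+1})}[\psi]$, closing the induction; the almost-sure qualifier is inherited from the almost-sure consistency of IBIS.

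I expect the main obstacle to be the control of the outer self-normalization, because the outer proposal $\hat p_\phi$ and the outer potential $\widehat g_{t+1}$ depend on the \emph{same} $M$-dependent random object, so the inner Monte Carlo error cannot be treated as independent noise; passing from almost-sure convergence of $\widehat g_{t+1} = \hat p(x_{t+1} \mid z_{0:t},\xi_t)^{-\eta}$ to convergence of the normalized weights and of the integrals above requires a uniform-integrability argument. This is exactly where the technical conditions deferred to Appendix~\ref{app:consistency} enter: boundedness and continuity of $f$ and of the test function $\psi$, non-degeneracy of the IBIS weights, and an integrability or lower-bound condition on the predictive density $p_\phi(x_{t+1}\mid z_{0:t},\xi_t)$ that dominates $g_{t+1}$ and its IBIS estimate uniformly on the relevant region. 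A secondary and more routine difficulty is justifying the interchange of the inner expectation with the $M \to \infty$ limit implicit in the definition $\tilde\Gamma_t^M = \mathbb{E}[\Gamma_t^M]$, which follows from dominated convergence under the same conditions, and carrying the growing path $z_{0:t}$ through the recursion, which prevents a black-box appeal to standard Feynman--Kac limit theorems but is otherwise only bookkeeping.
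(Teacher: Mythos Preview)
Your proposal is correct and follows essentially the same route as the paper's proof: induction on $t$, the ratio decomposition~\eqref{eq:ibis}--\eqref{eq:prediction}, IBIS consistency for the inner $\theta$-population, and convergence of the outer numerator and normalizer via a dominated-convergence argument. The technical conditions you anticipate (boundedness of $f$, an integrable dominating function for $p(z_{t+1}\mid z_{0:t},\theta)$) are precisely the Assumptions~\ref{ass:bounded-func} and~\ref{ass:strong-mixing} the paper imposes, and your identification of the self-normalization step as the place where they are needed matches the paper's use of Lebesgue's dominated convergence theorem for the normalizing constant of~\eqref{eq:prediction}.
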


\section{Related Work}
\citet{foster2021deep} propose optimizing the following \textit{sequential Prior Contrastive Estimation}~(sPCE) lower bound to the expected information gain
\begin{equation}\label{eq:spce-bound}
    \mathcal{L}_T^{\textrm{sPCE}}(\phi, L) = \mathbb{E}_{p_{\phi}(\theta_0, z_{0:T})p(\theta_{1:L})}\bigl[ g_L(\theta_{0:L}, z_{0:T})\bigr],
\end{equation}
where
\begin{equation}\label{eq:contrastive_estimator}
    g_L(\theta_{0:L}, z_{0:T}) = \log \frac{p_{\phi}(z_{0:T} \mid \theta_0)}{\frac{1}{L + 1}\sum_{\ell=0}^L p_{\phi}(z_{0:T} \mid \theta_\ell)}.
\end{equation}
A naive nested Monte Carlo estimator would exclude $\theta_0$ from the denominator, but such an estimator would have a large variance, particularly for long experiment sequences~($T \gg 1$). By including $\theta_0$ in the estimate for $p_\phi(z_{0:T})$, the authors show that $g_L$ is upper bounded by $\log(L+1)$, where $L$ is the number of regularizing or \emph{contrastive} samples, resulting in a low variance estimator amenable to optimization~\citep{poole2019variational}. However, this bound also implies that one needs to take an exponentially large value of $L$ to avoid being restricted by the upper bound which, depending on the true value of the EIG, may introduce significant bias.
Our Inside-Out SMC\textsuperscript{2} algorithm does not suffer from this particular drawback and only requires a small number of $\theta$ particles to provide an estimate of the EIG. This is achieved by leveraging the sequential structure of the experiments to maintain a running posterior over $\theta$ instead of relying on samples from the prior.

\citet{drovandi2013sequential}, \citet{drovandi2014sequential}, and \citet{moffat2020sequential} have previously proposed the use of IBIS for experimental design, albeit in the context of exchangeable data. In these approaches, IBIS is used solely to track the parameter posterior given the past experiments. Most importantly, their experiment design is myopic, only optimizing for the next experiment. In contrast, we embed IBIS into a general particle smoothing algorithm, where IBIS is used to approximate the marginalized dynamics. This formulation enables our technique to optimize across the entire horizon of experiments and lends itself to amortization.

Our work also shares similarities in formulation with that of \citet{blau2022optimizing} which formulates the sequential BED problem as a hidden parameter Markov decision process~\citep{doshi-velez2016hidden}. Unlike our work, they adopt a reinforcement learning approach that applies only to exchangeable models and optimize the sPCE bound.

In the context of SMC, a related approach to ours is given by~\citet{wigren2019parameter}, who also propagate the filtering posterior for the parameter at hand to perform parameter-marginalized inference. However, they are not interested in experimental design, and in contrast to our Inside-Out SMC\textsuperscript{2}, they only need to compute the \emph{marginal} posterior distribution, whereas we reuse the \emph{pathwise} posterior distribution to compute our potential function $g$. Nonetheless, we believe the two approaches are related and could be combined in future work.

\section{Empirical Evaluation}
For the empirical evaluation of our method, we consider the setting of designing 50 sequential experiments to identify the parameters of input (design) dependent dynamical systems. The systems we examine are abstractions of robotic systems with dynamics that are described by stochastic differential equations and discretized using the Euler--Maruyama method~\citep[see, e.g.,][Section 4.3]{sarkka2023bayesian}. Hence, all scenarios involve likelihoods that are non-exchangeable conditionally Markovian densities. Moreover, these dynamical systems enforce input constraints that restrict the design space, making the sequential experimental design problem challenging.

We construct the stochastic policy $\pi_\phi$ using a mean function $m_\phi$ parameterized by a gated recurrent unit architecture~\citep[GRU,][]{cho2014learning}, with an additional learnable parameter $\Sigma_\phi$ for the variance. Then the stochastic policy $\pi_\phi$ is constructed as the law of the random variable $\xi_t = a \cdot \tanh(s_t) + b$, where $s_t \sim \mathcal{N}(m_\phi(z_{0:t}), \Sigma_\phi)$.
Here $(a,b)$ are scale and shift parameters that reflect the design constraints. Exhaustive details of the network architecture and hyperparameters are given in Section~\ref{sec:exp_details}.

We compare our algorithm, Inside-Out SMC\textsuperscript{2} (IO-SMC\textsuperscript{2}), to a few different types of design policies. We include two simple baselines - a random policy, which samples designs from a uniform distribution, and a pseudo-random binary signal (PRBS) policy, which randomly chooses between the upper and lower design limits. Additionally, we evaluate a myopic, non-amortized method, that corresponds to IO-SMC\textsuperscript{2} with a one-step look-ahead. This leads to a greedy, sub-optimal design that optimizes only for the next experiment. This approach is comparable to \citet{drovandi2013sequential}. Finally, we compare to iDAD~\citep{ivanova2021implicit} trained on the sPCE lower bound. iDAD, unlike DAD, can accommodate non-exchangeable sequential data. Although iDAD was originally developed for implicit models, we provide access to the conditional transition densities in our experiments to guarantee a fair comparison.

As evaluation metrics, we report both the sPCE bound and a nested Monte Carlo estimate of the \emph{risk-neutral} EIG in \eqref{eq:eig_factorization}. We choose to report this metric instead of the \emph{risk-sensitive} EIG from \eqref{eq:risk_sensitive_eig} in order to maintain a consistent empirical comparison with the sPCE bound, which itself does not account for risk. We compute this Monte Carlo estimate of the EIG under equally weighted sample trajectories
$\{z_{0:T}^n\}_{n=1}^N$ from the marginal $p_\phi(z_{0:T})$ as follows
\begin{equation}\label{eq:eig-estimator}
    \!\!\! \mathcal{I}(\phi) \!=\! \mathbb{E}_{p_\phi(z_{0:T})} \! \left[ \sum_{t=1}^T r_t(z_{0:t}) \right] \! \approx \! \frac{1}{N} \sum_{n=1}^N \! \sum_{t=1}^T \hat{r}_t(z_{0:t}^n). \!\!\!
\end{equation}
Here, $\hat{r}_t(z_{0:t}^n)$ is itself a particle approximation of the true stage reward in \eqref{eq:stage_reward} obtained using the filtering posterior provided by IBIS. The samples $\{z_{0:T}^n\}_{n=1}^N$ are drawn by running Algorithm~\ref{alg:i-o-smc2}, while setting $b_t^n = n$ in line~\ref{line:resampling}. This leads to a routine that generates trajectories from the marginal distribution $p_\phi(z_{0:T})$. 

For all experiments, the EIG estimate in~\eqref{eq:eig-estimator} was computed at evaluation time for $T=50$, $N = 16$ and $M = 1024$, while the sPCE bound in~\eqref{eq:spce-bound} was computed using $16$ outer samples and $L = 10^6$ regularizing samples. The statistics of these estimates are computed for $25$ seeds. Further experimental details can be found in Appendix~\ref{sec:exp_details}.

\subsection{Stochastic Pendulum}
We consider two different representations of the stochastic dynamics of a compound pendulum. The aim is to infer a vector of parameters that combines the mass and length of the pendulum by observing a sequence of states, comprised of its angular position and velocity. The design is the torque applied as input to the system.

\subsubsection{Conditionally Linear formulation}\label{sec:cond_linear_exp}
First, we consider a conditionally linear formulation of the dynamics of the compound pendulum, see Appendix~\ref{app:pendulum-linear}. In conjunction with a Gaussian prior over the parameters, this setting allows us to compare IO-SMC\textsuperscript{2} with exact posterior computation against the approximate posteriors delivered by IBIS. Details on exact posterior inference can be found in Appendix~\ref{sec:conditionally_linear}.
\begin{figure}[t]
    \centering
    \begin{tikzpicture}

\definecolor{forestgreen4416044}{RGB}{0,77,64}

\begin{axis}[
    width=8.75cm,
    height=7cm,
    grid=both,
    minor tick num=3,
    grid style={line width=.1pt, draw=gray!10},
    major grid style={line width=.1pt, draw=gray!50},
    xlabel=Number of Experiments,
    ylabel=Information Gain,
    legend style={
        nodes={scale=0.85, transform shape},
        at={(-11pt, 148.25pt)},
        anchor=north west,
    },
    legend cell align={left},
    ymin = -0.25,
    ymax = 4.0
]
    \addplot [
        black,
        thick,
        mark=o,
        mark size=3,
        error bars/.cd,
            y dir=both,y explicit,
    ] coordinates {
        (5.0, 0.07161156235090932) +- (0.0, 1.388456903851859e-15)
        (10.0, 0.5883454344117017) +- (0.0, 0.007311767064967689)
        (15.0, 0.8969540087113823) +- (0.0, 0.013946068840616034)
        (20.0, 1.116903123926479) +- (0.0, 0.028241681597221426)
        (25.0, 1.468278144575075) +- (0.0, 0.06957827043715863)
        (30.0, 1.8484101443385321) +- (0.0, 0.10779500372002392)
        (35.0, 2.1907966260130105) +- (0.0, 0.12802240485423114)
        (40.0, 2.5235308717967717) +- (0.0, 0.1308479250609831)
        (45.0, 2.9447573722608444) +- (0.0, 0.14504002884308428)
        (50.0, 3.368373462720764) +- (0.0, 0.1684394543098484)
    };
    \addlegendentry{IO-SMC\textsuperscript{2}}
    \addplot [
        black,
        thick,
        mark=square,
        mark size=3,
        error bars/.cd,
            y dir=both,y explicit,
    ] coordinates {
        (5.0, 0.13799058296021446) +- (0.0, 1.9160705273155654e-15)
        (10.0, 0.6539868433737624) +- (0.0, 0.003740368242227807)
        (15.0, 0.893829262776036) +- (0.0, 0.00888332633891218)
        (20.0, 1.1294196125936133) +- (0.0, 0.041653137869803715)
        (25.0, 1.4538769581680229) +- (0.0, 0.06836318794221811)
        (30.0, 1.8086046632861035) +- (0.0, 0.09538007556864042)
        (35.0, 2.139592639635067) +- (0.0, 0.11287474979348362)
        (40.0, 2.539831437535729) +- (0.0, 0.13137467293235863)
        (45.0, 3.0640563504500955) +- (0.0, 0.16548538189850162)
        (50.0, 3.414294158975968) +- (0.0, 0.17195802684073686)
    };
    \addlegendentry{IO-SMC\textsuperscript{2} (Exact)}
    \addplot [
        black,
        thick,
        mark=diamond,
        mark size=3,
        error bars/.cd,
            y dir=both,y explicit,
    ] coordinates {
        (5.0, 0.20272922077629982) +- (0.0, 3.8043719165540934e-15)
        (10.0, 0.7008624648284555) +- (0.0, 0.006979078789512275)
        (15.0, 0.9635595642764109) +- (0.0, 0.02359751095166349)
        (20.0, 1.1599493528757996) +- (0.0, 0.048180064392983724)
        (25.0, 1.3323088650762822) +- (0.0, 0.08424642991987262)
        (30.0, 1.4975720180824317) +- (0.0, 0.13053155871952127)
        (35.0, 1.6605851776768832) +- (0.0, 0.17956279826399424)
        (40.0, 1.8233222694557571) +- (0.0, 0.23312871435834565)
        (45.0, 1.9903116474115760) +- (0.0, 0.28970826288530127)
        (50.0, 2.1582511102958497) +- (0.0, 0.345252244354387)
    };
    \addlegendentry{PRBS}
    \addplot [
        black,
        thick,
        mark=triangle,
        mark size=4,
        error bars/.cd,
            y dir=both,y explicit,
    ] coordinates {
        (5.0, 0.07568809049562308) +- (0.0, 0.061912849605351004)
        (10.0, 0.3422215373217974) +- (0.0, 0.09120803362491624)
        (15.0, 0.5252043531375195) +- (0.0, 0.09361992978147446)
        (20.0, 0.6670799329253952) +- (0.0, 0.09414091611192746)
        (25.0, 0.7919381957975652) +- (0.0, 0.0969310735572835)
        (30.0, 0.904940907432193) +- (0.0, 0.10586752727472198)
        (35.0, 1.01201445880981) +- (0.0, 0.12050755169158237)
        (40.0, 1.1166571141420853) +- (0.0, 0.14288836185741793)
        (45.0, 1.224183430338142) +- (0.0, 0.17347366564932146)
        (50.0, 1.3332246944262631) +- (0.0, 0.2077544154857676)
    };
    \addlegendentry{Random}
\end{axis}
\end{tikzpicture} 
    \vspace{-0.25cm}
    \caption{Accumulation of the information gain computed in closed form for different policies on the conditionally linear stochastic pendulum with a Gaussian prior. We report the mean and standard deviation over $512$ realizations.}
    \label{fig:linear_pendulum_info_gain}
    \vspace{0.25cm}
\end{figure}
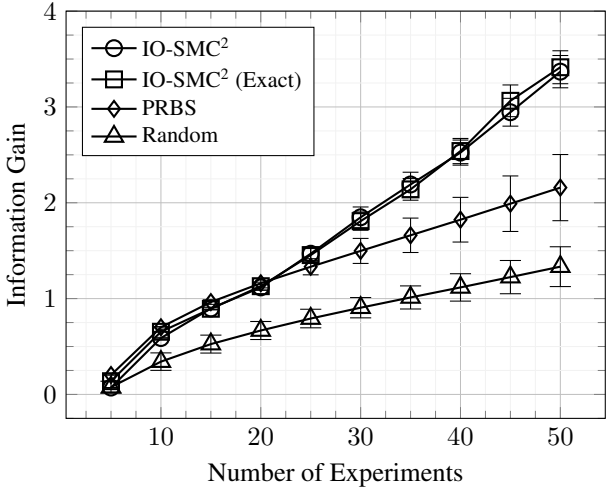

In this conjugate setting, the information gain can be computed in closed form, given observations from sequential experiments. Figure~\ref{fig:linear_pendulum_info_gain} depicts the mean and standard deviation of the IG over experiments for different policies, obtained by simulating $512$ sets of sequential experiments for different samples from the parameter prior. The amortized policies are superior to the random and PRBS policies. These results corroborate Table~\ref{tab:linear-pendulum}, which reports EIG estimates and sPCE bounds for all considered policies. The two variants of our algorithm outperform all considered baselines on both metrics.
\begin{figure}[t]
    \centering
    \begin{tikzpicture}

\definecolor{forestgreen4416044}{RGB}{0,77,64}
\definecolor{steelblue31119180}{RGB}{30,136,229}

\begin{axis}[
    width=8.75cm,
    height=7cm,
    grid=both,
    minor tick num=3,
    grid style={line width=.1pt, draw=gray!10},
    major grid style={line width=.1pt, draw=gray!50},
    xlabel=Training Epochs,
    ylabel=EIG Estimate,
    legend style={
        nodes={scale=0.85, transform shape},
        at={(181pt, 6pt)},
        anchor=south east
    },
    legend cell align={left},
    ymin = -0.25,
    ymax = 4.0
]
    \addplot [
        black,
        thick,
        mark=*,
        mark size=2,
        error bars/.cd,
            y dir=both,y explicit,
    ] coordinates {
        (1.0, 0.034333807542068884) +- (0.0, 0.012777299220350273)
        (2.0, 0.06190272951406024) +- (0.0, 0.041125862010712166)
        (3.0, 0.11727582612189716) +- (0.0, 0.10096020848494787)
        (4.0, 0.2004209286341635) +- (0.0, 0.18162252906435267)
        (5.0, 0.3406186333940914) +- (0.0, 0.2997401752679322)
        (6.0, 0.5573050922683379) +- (0.0, 0.3840611663878479)
        (7.0, 0.8897275094548454) +- (0.0, 0.5129228753066545)
        (8.0, 1.2443763895295819) +- (0.0, 0.5081138484891182)
        (9.0, 1.606525130059341) +- (0.0, 0.5104077395126563)
        (10.0, 1.9173756475175099) +- (0.0, 0.4836592704155913)
        (11.0, 2.211450535302662) +- (0.0, 0.4323442368388904)
        (12.0, 2.435317376017777) +- (0.0, 0.4021791690206066)
        (13.0, 2.5998102614784413) +- (0.0, 0.3947074521685271)
        (14.0, 2.7760400045622577) +- (0.0, 0.3420158830347885)
        (15.0, 2.9170793696036346) +- (0.0, 0.29959060391815234)
        (16.0, 3.0077438084818966) +- (0.0, 0.2364101510573952)
        (17.0, 3.0690922792702513) +- (0.0, 0.2147034842241701)
        (18.0, 3.128821145404627) +- (0.0, 0.195293753814367)
        (19.0, 3.2073955167754917) +- (0.0, 0.16932769658578517)
        (20.0, 3.2392485707811933) +- (0.0, 0.16567695629657594)
        (21.0, 3.281875172856365) +- (0.0, 0.11760737070750146)
        (22.0, 3.3124726735518846) +- (0.0, 0.11229711333583155)
        (23.0, 3.344093873048434) +- (0.0, 0.13120189255115217)
        (24.0, 3.35134303249425) +- (0.0, 0.14356233652459313)
        (25.0, 3.3794267957236848) +- (0.0, 0.1190090522549283)
    };
    \addlegendentry{IO-SMC\textsuperscript{2}}
    \addplot [
        darkgray,
        thick,
        mark=square,
        mark size=2,
        mark options={solid},
        error bars/.cd,
            y dir=both,y explicit,
    ] coordinates {
        (1.0, 0.035616459921870146) +- (0.0, 0.011175661475871658)
        (2.0, 0.06255340543491762) +- (0.0, 0.03814105597338317)
        (3.0, 0.1309518027091984) +- (0.0, 0.12157131762788337)
        (4.0, 0.2586871309300582) +- (0.0, 0.2151222833944215)
        (5.0, 0.44646238088696777) +- (0.0, 0.33662175899913205)
        (6.0, 0.6776734496709467) +- (0.0, 0.39861236826957847)
        (7.0, 1.0187048559544718) +- (0.0, 0.47049268357706203)
        (8.0, 1.4322134854109552) +- (0.0, 0.5348965845543251)
        (9.0, 1.731743540543758) +- (0.0, 0.5611761263257665)
        (10.0, 2.0406020300241994) +- (0.0, 0.5667667842702762)
        (11.0, 2.289243545501996) +- (0.0, 0.5106520877981163)
        (12.0, 2.5335690597157177) +- (0.0, 0.469902868592756)
        (13.0, 2.71159862720185) +- (0.0, 0.3973451564294755)
        (14.0, 2.8778384178961938) +- (0.0, 0.3395995055655779)
        (15.0, 3.010930990524492) +- (0.0, 0.2833562263613981)
        (16.0, 3.1075931178991874) +- (0.0, 0.25257263549620945)
        (17.0, 3.2032788285783376) +- (0.0, 0.22044344979835115)
        (18.0, 3.261250930115647) +- (0.0, 0.20119460761380434)
        (19.0, 3.313204912972115) +- (0.0, 0.1778632299551653)
        (20.0, 3.364265376050372) +- (0.0, 0.15424874205724265)
        (21.0, 3.401897945221866) +- (0.0, 0.15319896726811416)
        (22.0, 3.412911801398632) +- (0.0, 0.14389392519800442)
        (23.0, 3.4383464868380598) +- (0.0, 0.14396811488348807)
        (24.0, 3.493491500043553) +- (0.0, 0.12800523819607057)
        (25.0, 3.504203663367869) +- (0.0, 0.13639683101777458)
    };
    \addlegendentry{IO-SMC\textsuperscript{2} (Exact)}
\end{axis}
\end{tikzpicture} 
    \vspace{-0.25cm}
    \caption{Training progression of the IO-SMC\textsuperscript{2} policy and its exact variant on the conditionally linear stochastic pendulum. At every epoch, we evaluate the EIG estimate using the mean policy. We report the mean and standard deviation over $25$ seeds.}
    \label{fig:linear_pendulum_eig_training}
    \vspace{-0.25cm}
\end{figure}
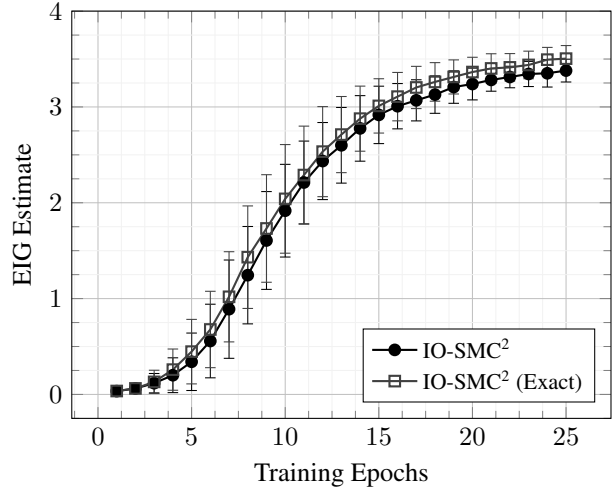
\begin{table}[t]
    \vspace{-0.15cm}
    \caption{EIG estimates and sPCE lower bounds on the conditionally linear pendulum experiment for the considered methods. We report the mean $\pm$ standard deviation over $25$ seeds.}
    \label{tab:linear-pendulum}
    \vskip 0.15in
    \centering
    \begin{tabular}{lcc}
        \toprule
        Policy & EIG Estimate \eqref{eq:eig-estimator} & sPCE \\
        \midrule
        Random & $1.37 \pm 0.08$ & $1.44 \pm 0.35$ \\
        Myopic & $1.45 \pm 0.12$ & $1.41 \pm 0.32$ \\
        PRBS & $2.24 \pm 0.19$ & $2.33 \pm 0.32$ \\ 
        iDAD & $2.58 \pm 0.17$ & $2.53 \pm 0.35$ \\
        IO-SMC\textsuperscript{2} & $3.53 \pm 0.15$ &  $3.66 \pm 0.44$ \\
        IO-SMC\textsuperscript{2} (Exact) & $3.63 \pm 0.18$ & $3.64 \pm 0.41$ \\
        \bottomrule
    \end{tabular}
    \vskip 0.1in
\end{table}

To evaluate the stability of our algorithm during training, we trained $25$ different policy networks using IO-SMC\textsuperscript{2} and its exact variant. The means and standard deviations of EIG estimates obtained after each training epoch are depicted in Figure~\ref{fig:linear_pendulum_eig_training}. Notice that the standard deviation shrinks over training epochs, reaching a final value comparable to that observed in EIG estimates for a single policy~(Table~\ref{tab:linear-pendulum}), implying consistency across training runs.

Figure~\ref{fig:linear_pendulum_info_gain} and Figure~\ref{fig:linear_pendulum_eig_training} demonstrate that the two versions of IO-SMC\textsuperscript{2} deliver comparable performance during learning and inference, empirically validating the use of IBIS in general non-conjugate settings. 

\subsubsection{Nonlinear formulation}
We now consider the standard, nonlinear version of the stochastic pendulum. We choose a log-normal prior for the parameters, ensuring that the mass and length of the pendulum can only take positive values, see Appendix~\ref{app:pendulum-non-linear}. As a result, the exact version of our algorithm is no longer applicable, and we have to use IO-SMC\textsuperscript{2} in its general form.

EIG estimates and sPCE lower bounds for different policies in this environment are reported in Table~\ref{tab:nonlinear_pendulum}. The policy trained using IO-SMC\textsuperscript{2} outperforms all baseline methods. An example of a trajectory generated by the mean of the trained policy is given in Figure~\ref{fig:nonlinear_pendulum_sample_trajectory}. The policy has learned to swing the pendulum to achieve greater and greater angular velocities by alternating between maximum positive and negative designs, thus exploring more of the phase space.

\subsection{Stochastic Cart-Pole}
We now consider a cart-pole system with additive noise in the acceleration of the cart. The unknown parameters are the masses of the cart and pole and the length of the pole. The outcome of every experiment is an observation of the cart-pole state consisting of the position and velocity of the cart and the angular position and angular velocity of the pole. The design is the force applied to the cart at discrete intervals. The prior on the parameters is again a log-normal distribution. Further details on the experimental setup are given in Appendix~\ref{app:cart-pole}.

\begin{figure}[t]
    \centering
    \begin{tikzpicture}
\begin{axis}[
    width=8.75cm,
    height=6.1cm,
    grid=both,
    minor tick num=3,
    grid style={line width=.1pt, draw=gray!10},
    major grid style={line width=.1pt, draw=gray!50},
    xlabel=Number of Experiments,
    legend style={
        nodes={scale=0.85, transform shape},
        at={(-11pt, 122pt)},
        anchor=north west
    },
    table/col sep=comma,
    ymin=-3.0,
    ymax=5.0
]
    \pgfplotstableread{figures/nonlinear_pendulum_csmc_ibis_trajectory.csv}\loadedtable
    \addplot [black, thick, densely dotted] table [x=t,y=q] {\loadedtable};
    \addlegendentry{$q$}
    \addplot [black, thick, dashed] table [x=t,y=q_dot] {\loadedtable};
    \addlegendentry{$\dot{q}$}
    \addplot [black, thick] table [x=t,y=u] {\loadedtable};
    \addlegendentry{$\xi$}
\end{axis}
\end{tikzpicture}
    \vspace{-0.25cm}
    \caption{A sample experiment trajectory generated by the amortized policy during deployment on the nonlinear stochastic pendulum environment. $q$ is the angle of the pendulum from the vertical, $\dot{q}$ is the angular velocity and $\xi$ is the design.}
    \label{fig:nonlinear_pendulum_sample_trajectory}
    \vspace{-0.20cm}
\end{figure}
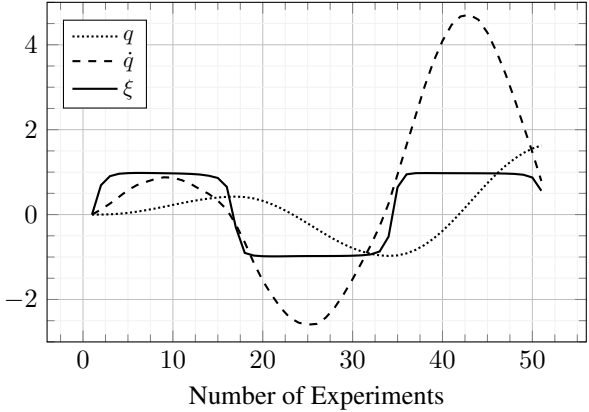
\begin{table}[t]
    \vspace{-0.20cm}
    \caption{EIG estimates and sPCE lower bounds on the nonlinear pendulum experiment for the considered methods. We report the mean $\pm$ standard deviation over $25$ seeds.}
    \label{tab:nonlinear_pendulum}
    \vskip 0.1in
    \centering
    \begin{tabular}{lcc}
        \toprule
        Policy & EIG Estimate \eqref{eq:eig-estimator} & sPCE \\
        \midrule
        Random & $2.12 \pm 0.21$ & $2.28 \pm 0.25$ \\
        Myopic & $2.15 \pm 0.18$ & $2.27 \pm 0.32$ \\
        PRBS & $3.00 \pm 0.20$ & $2.94 \pm 0.33$ \\ 
        iDAD & $3.01 \pm 0.29$ & $3.18 \pm 0.41$ \\
        IO-SMC\textsuperscript{2} & $3.72 \pm 0.17$ & $3.77 \pm 0.38$ \\
        \bottomrule
    \end{tabular}
    \vskip 0.1in
\end{table}

Table~\ref{tab:cartpole} reports the performance of each policy under consideration. The policy trained using IO-SMC\textsuperscript{2} achieves the highest mean EIG estimate at $21.23$. This experiment demonstrates the primary drawback of the sPCE bound; one needs approximately $1.3$ billion regularizing samples to yield an sPCE bound of $21$. To accommodate this number of samples, the implementation of \citet{ivanova2021implicit}, which relies on parallel computation, would require about $\SI{660}{\giga\byte}$ of memory with double-precision floats. In contrast, IO-SMC\textsuperscript{2} requires just $1024$ inner samples to obtain our estimate in Table~\ref{tab:cartpole}. Thus, for sequential experimental design problems with high EIG values, the advantage of IO-SMC\textsuperscript{2} is clear. The sample efficiency of our EIG estimator compared to sPCE is further demonstrated in Table~\ref{tab:eig-spce-comparison} in Appendix~\ref{app:cart-pole}.
Nevertheless, we note that the sPCE lower bound is still valuable for training despite its bias. Indeed, although iDAD underperforms compared to IO-SMC\textsuperscript{2}, it achieves a higher EIG at evaluation time than the expected upper bound of approximately $10$, corresponding to the $2^{14}$ regularizing samples used at training time.

A sample trajectory generated by the trained policy mean is depicted in Figure~\ref{fig:sample-cartpole-trajectory}. As in the case of the pendulum, the policy has learned to alternate between the design limits to explore the phase space efficiently.
\begin{figure}[t]
    \centering
    \begin{tikzpicture}
\begin{axis}[
    width=8.75cm,
    height=6.1cm,
    grid=both,
    minor tick num=3,
    grid style={line width=.1pt, draw=gray!10},
    major grid style={line width=.1pt, draw=gray!50},
    xlabel=Number of Experiments,
    legend style={
        nodes={scale=0.85, transform shape},
        at={(-11pt, 112pt)},
        anchor=north west
    },
    table/col sep=comma,
]
    \pgfplotstableread{figures/cartpole_ibis_csmc_trajectory.csv}\loadedtable
    \addplot [darkgray, thick, loosely dotted] table [x=t,y=s] {\loadedtable};
    \addlegendentry{$s$}
    \addplot [black, thick, dashed] table [x=t,y=q] {\loadedtable};
    \addlegendentry{$q$}
    \addplot [darkgray, thick, densely dotted] table [x=t,y=ds] {\loadedtable};
    \addlegendentry{$\dot{s}$}
    \addplot [black, thick, densely dashdotted] table [x=t,y=dq] {\loadedtable};
    \addlegendentry{$\dot{q}$}
    \addplot [black, thick] table [x=t,y=u] {\loadedtable};
    \addlegendentry{$\xi$}
\end{axis}
\end{tikzpicture} 
    \vspace{-0.25cm}
    \caption{A sample experiment trajectory generated by the policy during deployment on the stochastic cart-pole environment. Here, $s$ and $\dot{s}$ are the position and velocity of the cart respectively, $q$ is the angle of the pole, $\dot{q}$ is its angular velocity and $\xi$ is the design.}
    \label{fig:sample-cartpole-trajectory}
    \vspace{-0.2cm}
\end{figure}
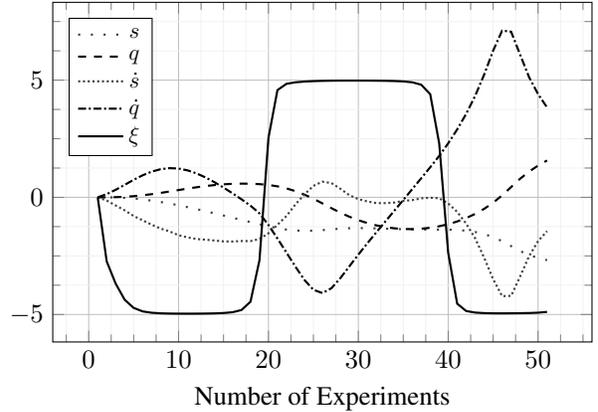
\begin{table}[t]
    \vspace{-0.20cm}
    \caption{EIG estimates and sPCE lower bounds on the stochastic cart-pole experiment for the considered methods. We report the mean $\pm$ standard deviation over $25$ seeds. sPCE bounds for all policies hit the upper bound of $\log(10^6) \approx 13.82$, and precise estimates would need at least $\exp(21) \approx 1.3$ billion regularizing samples, far beyond our hardware limits.}
    \label{tab:cartpole}
    \vskip 0.1in
    \centering
    \begin{tabular}{lcc}
        \toprule
        Policy & EIG Estimate \eqref{eq:eig-estimator} & sPCE \\
        \midrule
        Random & $16.81 \pm 0.77$ & $13.72 \pm 0.08$ \\
        Myopic & $16.53 \pm 0.71$ & $13.74 \pm 0.10$ \\
        PRBS & $18.28 \pm 0.50$ & $13.80 \pm 0.03$ \\
        iDAD & $18.99 \pm 0.68$ & $13.81 \pm 0.01$ \\
        IO-SMC\textsuperscript{2} & $21.23 \pm 0.62$ & $13.82 \pm 0.00$ \\
        \bottomrule
    \end{tabular}
    \vskip 0.1in
\end{table}

\subsection{Stochastic Double-Link}
Our final experiment uses a stochastic double-link~(double pendulum) system with the four unknowns being the masses and lengths of both links. The design is two-dimensional, corresponding to the torque applied at each of the two actuated joints. The dynamical equations and policy hyperparameters are given in Appendix~\ref{app:double-pendulum}.
\begin{figure}[t]
    \centering
    \begin{tikzpicture}
\begin{axis}[
    width=8.75cm,
    height=6.1cm,
    grid=both,
    minor tick num=3,
    grid style={line width=.1pt, draw=gray!10},
    major grid style={line width=.1pt, draw=gray!50},
    xlabel=Number of Experiments,
    legend columns=2, 
    legend style={
        nodes={scale=0.85, transform shape},
        at={(-11pt, 112pt)},
        anchor=north west
    },
    table/col sep=comma,
]
    \pgfplotstableread{figures/double_pendulum_ibis_csmc_trajectory.csv}\loadedtable
    \addplot [gray, thick, loosely dotted] table [x=t,y=q1] {\loadedtable};
    \addlegendentry{$q_1 \,\,$}
    \addplot [gray, thick, dashed] table [x=t,y=q2] {\loadedtable};
    \addlegendentry{$q_2$}
    \addplot [gray, thick, densely dotted] table [x=t,y=dq1] {\loadedtable};
    \addlegendentry{$\dot{q}_1 \,\,$}
    \addplot [gray, thick, densely dashed] table [x=t,y=dq2] {\loadedtable};
    \addlegendentry{$\dot{q}_2$}
    \addplot [black, thick, densely dashdotted] table [x=t,y=u1] {\loadedtable};
    \addlegendentry{$\xi_1 \,\,$}
    \addplot [black, thick] table [x=t,y=u2] {\loadedtable};
    \addlegendentry{$\xi_2$}
\end{axis}
\end{tikzpicture} 
    \vspace{-0.25cm}
    \caption{A sample experiment trajectory generated by the policy during deployment on the stochastic double-link environment. $q_1$ and $q_2$ are the angles from the vertical for the two links, $\dot{q}_1$ and $\dot{q}_2$ their respective angular velocities, and $\xi_1$ and $\xi_2$ are the designs.}
    \label{fig:double-pendulum-trajectory}
    \vspace{-0.35cm}
\end{figure}
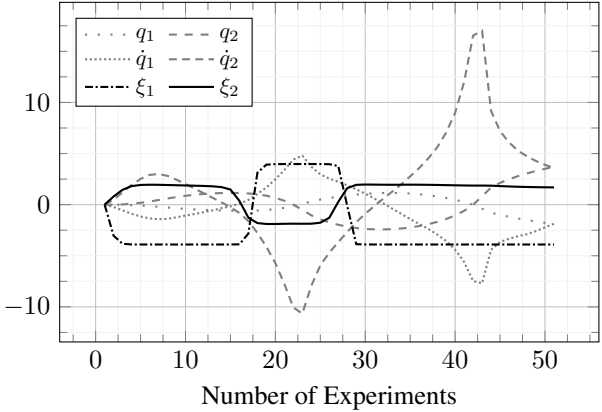
Figure~\ref{fig:double-pendulum-trajectory} plots a sample trajectory generated by a policy trained using our algorithm. The optimal policy implements two coordinated switches in the design dimensions to bring the system into informative states. In this experiment, iDAD also learns a similar policy, as reflected in the EIG estimates in Table~\ref{tab:double-link}.
\begin{table}[t]
    \caption{EIG estimates and sPCE lower bounds on the stochastic double-link experiment for the considered methods. We report the mean $\pm$ standard deviation over $25$ seeds.}
    \label{tab:double-link}
    \vskip 0.1in
    \centering
    \begin{tabular}{lcc}
        \toprule
        Policy & EIG Estimate \eqref{eq:eig-estimator} & sPCE \\
        \midrule
        Random & $7.81 \pm 0.40$ & $7.79 \pm 0.43$ \\
        Myopic & $8.00 \pm 0.44$ & $8.13 \pm 0.50$ \\
        PRBS & $5.25 \pm 0.26$ & $5.12 \pm 0.42$ \\
        iDAD & $11.73 \pm 0.45$ & $11.52 \pm 0.36$ \\
        IO-SMC\textsuperscript{2} & $11.53 \pm 0.49$ & $11.45 \pm 0.42$ \\
        \bottomrule
    \end{tabular}
    \vskip 0.1in
\end{table}

\section{Discussion and Limitations}
We have introduced a novel method of amortized sequential Bayesian experimental design, taking inspiration from the control-as-inference framework. We cast the optimization of sequential designs as a smoothing problem in a non-Markovian state-space model. To perform inference in this model, we developed a novel nested particle filtering algorithm, which we call Inside-Out SMC\textsuperscript{2}. Our approach naturally lends itself to amortization via likelihood optimization in the form of Markovian score climbing. While we have used Inside-Out SMC\textsuperscript{2} in the context of sequential BED, we believe it may find uses in other settings, where one wishes to obtain pathwise smoothing trajectories under parameter-marginalized models. 

Our experimental evaluation shows that our approach holds promise as a generic and efficient way to learn experimental design policies: it is amortized, non-myopic, widely applicable, and easy to train. Our learned policies outperform the main alternative~\citep[iDAD,][]{ivanova2021implicit} while only requiring a fraction of the number of samples at both training and evaluation time. In particular, the sPCE bound, used therein, requires the number of samples to grow exponentially with the maximal EIG value. This can make it unsuitable as a learning objective in certain dynamical systems. On the contrary, IO-SMC\textsuperscript{2} can compute approximations to the EIG, no matter its value, with a relatively small number of particles, making it a better-behaved and more viable alternative.

One limitation of our approach is the requirement to evaluate the conditional transition densities in closed form. Thus, our method is unsuitable for sequential experimental design problems in dynamical models with intractable densities; for instance, choosing optimal measurement times in a compartmental epidemic model modeled as a Markov jump process~\citep{whitehouse2023consistent}.

\emph{Time complexity.} IO-SMC\textsuperscript{2} has a time complexity of $\mathcal{O}(NMT^2)$. CSMC requires $N \propto T$ samples to be stable~\citep[Proposition 5]{lindsten2015uniform} for an increasing number of time steps, making our algorithm $\mathcal{O}(MT^3)$ when accounting for statistical stability. The outer SMC loop can be trivially parallelized, reducing it to $\mathcal{O}(MT^2)$.

\emph{Scalability.} SMC samplers, which IBIS is an instance of, are known to scale well with the dimension~\citep{dai2022invitation}. Since the random walk Metropolis kernel, which we use within IBIS, scales reasonably well with the dimensionality of the problem~\citep{gelman1997weak}, we expect our algorithm to scale well in the parameter dimension. However, scaling in the number of state dimensions might be more problematic, as we are using a bootstrap proposal, known to degenerate when the dimension of the observations is large. This is related to the informativeness of the potential function, which in our case can be regulated by controlling the tempering parameter $\eta$.

\emph{Tempering.} The choice of the tempering parameter $\eta$ remains an open research question, that touches on the interaction between the optimism in the policy amortization step and the variance of the particle filtering weights within IO-SMC\textsuperscript{2}. Addressing this issue requires solving a bias-variance trade-off that is inherent to risk-sensitive objectives. In principle, the most favorable outcome would be to obtain an inference problem that can scale to any value of $\eta$.

\section*{Individual Contributions}
The original idea for the article was conceived by HA and developed jointly with AC and SI. SI developed and implemented Inside-Out SMC\textsuperscript{2} with AC's guidance, and the proof of its consistency is due to AC. The experiments are due equally to HA and SI. HA supervised the project. SI wrote the initial manuscript, and all authors contributed to revisions. SS reviewed and validated the technical details.

\newpage
\section*{Acknowledgements}
SI gratefully acknowledges funding from the Research Council of Finland. HA acknowledges funding by the Finnish Center for Artificial Intelligence (FCAI).

\section*{Impact Statement}
This paper presents work whose goal is to advance the field of Machine Learning. There are many potential societal consequences of our work, none of which we feel must be specifically highlighted here.

\bibliography{references}
\bibliographystyle{icml2024}

\newpage
\appendix
\onecolumn
\section{Proof of Proposition~\ref{prop:eig_factorization}}
\label{app:prop_1_proof}
\begin{proof}
We start with the definition of the expected information gain in the terminal reward framework,
\begin{align*}
    \mathcal{I}(\phi) &\coloneq \mathbb{E}_{p_{\phi}(z_{0:T})} \Bigl[ \mathbb{H}\bigl[p(\theta)\bigr] - \mathbb{H}\bigl[p(\theta \mid z_{0:T})\bigr] \Bigr] \\
    &= \mathbb{E}_{p_{\phi}(z_{0:T}, \theta)} \left[\log \frac{p(\theta \mid z_{0:T})}{p(\theta)}\right].
\end{align*}
With repeated applications of Bayes' rule, we can write this in an equivalent form as
\begin{equation}\label{eq:eig_alternative_definition}
    \mathcal{I}(\phi) = \mathbb{E}_{p_{\phi}(z_{0:T}, \theta)}\bigl[\log p_{\phi}(z_{0:T} \mid \theta)\bigr] - \mathbb{E}_{p_{\phi}(z_{0:T})} \bigl[\log p_{\phi}(z_{0:T})\bigr].
\end{equation}
Let us look at the first term in~\eqref{eq:eig_alternative_definition}. From~\eqref{eq:joint_density}, we know that the conditional trajectory likelihood is
\begin{equation*}
  p_{\phi}(z_{0:T} \mid \theta) = p(x_0) \prod_{t=1}^T f(x_t\mid x_{t-1}, \xi_{t-1}, \theta)\,\pi_\phi(\xi_{t-1} \mid z_{0:t-1}).
\end{equation*}
We can then evaluate
\begin{align}
    T_1 &\coloneq \mathbb{E}_{p_{\phi}(z_{0:T}, \theta)}\bigl[\log p_{\phi}(z_{0:T} \mid \theta)\bigr] \nonumber\\
    &= \mathbb{E}_{p_{\phi}(z_{0:T}, \theta)} \left[ \log p(x_0) + \sum_{t=1}^T \log f(x_t\mid x_{t-1}, \xi_{t-1}, \theta) \right] + \mathbb{E}_{p_{\phi}(z_{0:T})} \left[ \sum_{t=1}^T \log \pi_\phi(\xi_{t-1} \mid z_{0:t-1}) \right] \nonumber \\
    &= -\mathbb{H}[p(x_0)] + \mathbb{E}_{p_{\phi}(z_{0:T}, \theta)} \left[ \sum_{t=1}^T \log f(x_t\mid x_{t-1}, \xi_{t-1}, \theta) \right] + \mathbb{E}_{p_{\phi}(z_{0:T})} \left[ \sum_{t=1}^T \log \pi_\phi(\xi_{t-1} \mid z_{0:t-1}) \right]. \label{eq:eig_term_1}
\end{align}
For the second term in the above equation, we get
\begin{align*}
        \mathbb{E}_{p_{\phi}(z_{0:T}, \theta)} \left[ \sum_{t=1}^T \log f(x_t\mid x_{t-1}, \xi_{t-1}, \theta) \right]
        &= \sum_{t=1}^T \mathbb{E}_{p_{\phi}(z_{0:t}, \theta)} \log f(x_t\mid x_{t-1}, \xi_{t-1}, \theta) \\
        &= \sum_{t=1}^T \mathbb{E}_{p_{\phi}(z_{0:t})\,p(\theta \mid z_{0:t})} \log f(x_t\mid x_{t-1}, \xi_{t-1}, \theta) \\
        &= \mathbb{E}_{p_{\phi}(z_{0:T})} \left[ \sum_{t=1}^T \mathbb{E}_{p(\theta \mid z_{0:t})} \log f(x_t\mid x_{t-1}, \xi_{t-1}, \theta) \right] \\
        &\eqqcolon \mathbb{E}_{p_{\phi}(z_{0:T})} \left[ \sum_{t=1}^T \alpha_t(z_{0:t}) \right],
\end{align*}
where we have made use of the fact that the conditional dynamics is Markovian in multiple places. We are then left with
\begin{equation}
    T_1 = -\mathbb{H}[p(x_0)] + \mathbb{E}_{p_{\phi}(z_{0:T})} \left[ \sum_{t=1}^T \alpha_t(z_{0:t}) \right] + \mathbb{E}_{p_{\phi}(z_{0:T})} \left[ \sum_{t=1}^T \log \pi_\phi(\xi_{t-1} \mid z_{0:t-1}) \right].
\end{equation}

Let's now look at the second term in~\eqref{eq:eig_alternative_definition}. For that, we first need the following factorization of the marginal trajectory distribution
\begin{align*}
    \log p_{\phi}(z_{0:T}) &= \log \left[p(z_0) \prod_{t=1}^T p_{\phi}(z_t \mid z_{0:t-1})\right] \\
    &= \log \left[p(x_0) \prod_{t=1}^T p_{\phi}(x_t, \xi_{t-1} \mid z_{0:t-1})\right] \\
    &= \log p(x_0) + \sum_{t=1}^T \Bigl[\log p(x_t \mid z_{0:t-1}, \xi_{t-1}) + \log \pi_\phi(\xi_{t-1} \mid z_{0:t-1})\Bigr],
\end{align*}
where
\begin{align*}
        p(x_t \mid  z_{0:t-1}, \xi_{t-1}) &= \int_\Theta p(x_t, \theta \mid  z_{0:t-1}, \xi_{t-1})\,\dd\theta \\
        &= \int_\Theta p(x_t \mid z_{0:t-1}, \xi_{t-1}, \theta)\,p(\theta \mid  z_{0:t-1}, \xi_{t-1})\,\dd\theta \\
        &= \int_\Theta f(x_t \mid x_{t-1}, \xi_{t-1}, \theta)\,p(\theta \mid  z_{0:t-1})\,\dd \theta.
\end{align*}
In the last line, we have used the fact that $\theta$ is conditionally independent of $\xi_{t-1}$ given $ z_{0:t-1}$ (since the policy is independent of $\theta$). We can now compute the second term of the EIG as
\begin{align}
    T_2 &\coloneq \mathbb{E}_{p_{\phi}(z_{0:T})} \bigl[\log p_{\phi}(z_{0:T}) \bigr] \nonumber \\
    &= \mathbb{E}_{p_{\phi}(z_{0:T})} \left[\log p(x_0) + \sum_{t=1}^T \Bigl[\log p(x_t \mid  z_{0:t-1}, \xi_{t-1}) + \log \pi_\phi(\xi_{t-1} \mid  z_{0:t-1})\Bigr]\right] \nonumber \\
    &= -\mathbb{H}[p(x_0)] + \mathbb{E}_{p_{\phi}(z_{0:T})} \left[ \sum_{t=1}^T \log p(x_t \mid  z_{0:t-1}, \xi_{t-1}) \right] + \mathbb{E}_{p_{\phi}(z_{0:T})} \left[ \sum_{t=1}^T \log \pi_\phi(\xi_{t-1} \mid z_{0:t-1}) \right].
\end{align}

The full expression for $\mathcal{I}(\phi)$ is hence
\begin{align}
    \mathcal{I}(\phi) &= T_1 - T_2 \nonumber \\
    &= \mathbb{E}_{p_{\phi}(z_{0:T})} \left[ \sum_{t=1}^T \alpha_t(z_{0:t}) \right]
    - \mathbb{E}_{p_{\phi}(z_{0:T})} \left[ \sum_{t=1}^T \log p(x_t \mid  z_{0:t-1}, \xi_{t-1}) \right] \nonumber \\
    &= \mathbb{E}_{p_{\phi}(z_{0:T})} \left[ \sum_{t=1}^T \Bigl\{ \alpha_t(z_{0:t}) + \beta_{t}(z_{0:t}) \Bigr\} \right] \\
    &= \mathbb{E}_{p_{\phi}(z_{0:T})} \left[ \sum_{t=1}^T r_t(z_{0:t}) \right],
\end{align}
where we have defined $\beta_{t}(z_{0:t}) = - \log p(x_t \mid  z_{0:t-1}, \xi_{t-1})$ and $r_t(z_{0:t}) = \alpha_t(z_{0:t}) + \beta_{t}(z_{0:t})$. This concludes the proof for the first part of Proposition~\ref{prop:eig_factorization}.

Let us now assume that our model has additive, constant noise in the dynamics (noise that is independent of the state, design and $\theta$ parameters). Under this assumption, the entropy $\mathbb{H}[f(x_t\mid x_{t-1}, \xi_{t-1}, \theta)]$ is a constant.
Let us now go back to the second term in~\eqref{eq:eig_term_1},
\begin{align*}
    \mathbb{E}_{p_{\phi}(z_{0:T}, \theta)} \left[ \sum_{t=1}^T \log f(x_t\mid x_{t-1}, \xi_{t-1}, \theta) \right] &= \sum_{t=1}^T \mathbb{E}_{p(\theta) \, p_{\phi}(z_{0:T} \mid \theta)} \left[ \log f(x_t\mid x_{t-1}, \xi_{t-1}, \theta) \right] \\
    & \hspace{-2cm} \begin{aligned}
        &= \sum_{t=1}^T \mathbb{E}_{p(\theta) \, p_{\phi}(x_{0:t}, \xi_{0:t-1} \mid \theta)} \left[ \log f(x_t\mid x_{t-1}, \xi_{t-1}, \theta) \right] \\
        &= \sum_{t=1}^T \mathbb{E}_{p(\theta) \, p_{\phi}(x_{0:t-1}, \xi_{0:t-1} \mid \theta)} \left[ \int_\mathcal{X} \log f(x_t \mid x_{t-1}, \xi_{t-1}, \theta) \, f(x_t \mid x_{t-1}, \xi_{t-1}, \theta) \, \dd x_t \right] \\
        &= \sum_{t=1}^T \mathbb{E}_{p(\theta) \, p_{\phi}(x_{0:t-1}, \xi_{0:t-1} \mid \theta)} \bigl[-\mathbb{H}[f(x_t\mid x_{t-1}, \xi_{t-1}, \theta)]\bigr] \\
        &= \textrm{constant}.
    \end{aligned}
\end{align*}
Using '$\equiv$' to denote equality up to an additive constant, we now have
\begin{equation*}
    T_1 \equiv -\mathbb{H}[p(x_0)] + \mathbb{E}_{p_{\phi}(z_{0:T})} \left[ \sum_{t=1}^T \log \pi_\phi(\xi_{t-1} \mid z_{0:t-1}) \right],
\end{equation*}
and hence,
\begin{align}
    \mathcal{I}(\phi) &= T_1 - T_2 \nonumber \\
    &\equiv \mathbb{E}_{p_{\phi}(z_{0:T})} \left[ \sum_{t=1}^T - \log p(x_t \mid  z_{0:t-1}, \xi_{t-1}) \right] \\
    &= \mathbb{E}_{p_{\phi}(z_{0:T})} \left[ \sum_{t=1}^T \beta_{t}(z_{0:t}) \right]
\end{align}
as required.
\end{proof}

\section{Proof of Proposition~\ref{prop:consistency}}\label{app:consistency}
    We prove this result by induction over $t$. We note that this result is not directly implied by the existing classical sequential Monte Carlo theory due to the dependency of the likelihood term on the filtering distribution over $\theta$. 

    We will make use of the following assumptions.
    \begin{assumption}[]\label{ass:bounded-func}
        For all $z_{0:t+1}, t\geq 1$ there exists $\alpha > 0$ such that, for all $\theta$, $0 < f(x_{t+1} \mid x_t, \xi_t, \theta) < \alpha$.
    \end{assumption}
    \begin{assumption}[]\label{ass:strong-mixing}
        For all $z_{0:t}, t\geq 1$ there exists an integrable function $\beta_{t+1}(z_{t+1})$ such that, for all $\theta$, $ 0 < p(z_{t+1} \mid z_{0:t}, \theta) < \beta_{t+1}(z_{t+1})$.
    \end{assumption}
    \begin{proof}
        For simplicity, we assume that resampling happens at each step of IBIS in Algorithm~\ref{alg:ibis}.
        The result is clear for $t=0$ under the law of large numbers. Now assume it is true for a given $t$, then, applying Proposition~\ref{prop:consistency} to the test function $\theta \mapsto f(x_{t+1} \mid x_t, \xi_t, \theta)$ we have, thanks to Assumption~\ref{ass:bounded-func}
        \begin{equation}
            \frac{1}{M} \sum_{m=1}^M f(x_{t+1} \mid x_t, \xi_t, \theta_{t}^m) \to \mathbb{E}_{p(\theta_t \mid z_{0:t})}\left[f(x_{t+1} \mid x_t, \xi_t, \theta_{t})\right] = p(x_{t+1} \mid z_{0:t}, \xi_t)
        \end{equation}
        so that, with probability $1$,
        \begin{equation}
            \exp\left\{-\eta \log \frac{1}{M} \sum_{m=1}^M f(x_{t+1} \mid x_t, \xi_t, \theta_{t}^m)\right\} \to \exp\left\{-\eta \log p(x_{t+1} \mid z_{0:t}, \xi_t)\right\}.
        \end{equation}
        Applying the induction hypothesis to $\theta \mapsto p(z_{t+1} \mid z_{0:t}, \theta)$, we have with probability $1$,
        \begin{equation}
            \frac{1}{M} \sum_{m=1}^M p(z_{t+1} \mid z_{0:t}, \theta_{t}^m) \to p(z_{t+1} \mid z_{0:t}).
        \end{equation}
        As a consequence, 
        \begin{equation}
        \left[
             \frac{1}{M} \sum_{m=1}^M p(z_{t+1} \mid z_{0:t}, \theta_{t}^m) \right] \, \exp\left\{-\eta \log \frac{1}{M} \sum_{m=1}^M f(x_{t+1} \mid x_t, \xi_t, \theta_{t}^m)\right\}\to p(z_{t+1} \mid z_{0:t}) \exp\left\{-\eta \log p(x_{t+1} \mid z_{0:t}, \xi_t)\right\} 
        \end{equation}
        almost surely.
        Similarly, using the positivity of $p(z_{t+1} \mid z_{0:t}, \theta_{t}^m)$ and by Lebesgue's dominated convergence theorem, the normalizing constant of the right-hand side of~\eqref{eq:prediction}
        \begin{equation}
                \int \left[ \frac{1}{M} \sum_{m=1}^M p(z_{t+1} \mid z_{0:t}, \theta_{t}^m) \right] \, \exp\left\{-\eta \log \frac{1}{M} \sum_{m=1}^M f(x_{t+1} \mid x_t, \xi_t, \theta_{t}^m)\right\} \, \dd z_{t+1}
        \end{equation}
        converges to 
        \begin{equation}
            \int p(z_{t+1} \mid z_{0:t}) \, \exp\left\{-\eta \log p(x_{t+1} \mid z_{0:t}, \xi_t)\right\} \dd z_{t+1}
        \end{equation}
        and we have
        \begin{equation}
            \frac{\Gamma^M_{t+1}(z_{0:t+1}, \theta_{0:t}^{1:M}, a_{1:t}^{1:M})}{\Gamma^M_{t}(z_{0:t}, \theta_{0:t}^{1:M}, a_{1:t}^{1:M})} 
                \to \frac{p(z_{t+1} \mid z_{0:t}) \exp\left\{-\eta \log p(x_{t+1} \mid z_{0:t}, \xi_t)\right\}}
                         {\int p(z_{t+1} \mid z_{0:t}) \exp\left\{-\eta \log p(x_{t+1} \mid z_{0:t}, \xi_t)\right\} \dd z_{t+1}}.
        \end{equation}
        The recursion is then obtained by noticing that the IBIS step~\eqref{eq:ibis} corresponds to a particle filter update targeting $p(\theta \mid z_{0:t+1})$, so that, under Assumption~\ref{ass:strong-mixing}, we can follow Proposition 11.4 in \citet{chopin2020introduction} to obtain that, for any bounded test function $\psi$, writing
        $$\Gamma^{M,\mathrm{IBIS}}_t(\theta^{1:M}_t) = \frac{\Gamma_{t}^M(z_{0:t}, \theta_{0:t}^{1:M}, a_{1:t}^{1:M})}{\Gamma^M_t(z_{0:t}, \theta_{0:t-1}^{1:M}, a_{1:t-1}^{1:M})},$$ we have
        \begin{equation}
            \mathbb{E}_{\Gamma^{M,\mathrm{IBIS}}_t}\left[\psi(\theta_t)\right] \to \mathbb{E}_{p(\theta_t \mid z_{0:t})}\left[\psi(\theta_t)\right]
        \end{equation}
        almost surely.
        Putting it all together, and noticing that $\Gamma_t(z_{0:t}) p(\theta_t \mid z_{0:t}) = \Gamma_t(z_{0:t}, \theta_t)$, we obtain the result.
    \end{proof}

\section{Algorithmic Details}

\subsection{Reweighting in Inside-Out SMC\textsuperscript{2} for General Potentials}\label{sec:general_weight_function}

\begin{algorithm}[htbp]
    \caption{Reweight function corresponding to~\eqref{eq:stage_reward}}
    \label{alg:reweight-function}
    \begin{algorithmic}[1]  
        \NOTATION Any operation with superscript $m$ is to be understood as performed for all $m = 1, \dots, M$.
        \FUNCTION{\textsc{Reweight}($t$)}
            \STATE $v_t^{mn} = f(x_t^n \mid x_{t-1}^n, \xi_{t-1}^n, \theta_{t-1}^{mn})$.
            \STATE $W_{t, \theta}^{mn} \propto W_{t-1, \theta}^{mn} \, v_t^{mn}$.
            \STATE $r_t^n = \sum_{m=1}^M W_{t, \theta}^{mn} \, \log v_t^{mn} - \log \sum_{m=1}^M W_{t-1, \theta}^{mn} \, v_t^{mn}$.
            \STATE \textbf{return} $g_t^n = \exp\{\eta \, r_t^n\}$.
        \ENDFUNCTION
    \end{algorithmic}
\end{algorithm}

Let us reproduce the general expression for the stage reward at time $t+1$ from~\eqref{eq:stage_reward}.
\begin{align}
         r_t(z_{0:t+1}) &= \mathbb{E}_{p(\theta \mid z_{0:t+1})} \bigl[\log f(x_{t+1}\mid x_t, \xi_t, \theta)\bigr] - \log p(x_{t+1} \mid z_{0:t}, \xi_t) \nonumber \\
         &= \mathbb{E}_{p(\theta \mid z_{0:t+1})} \bigl[\log f(x_{t+1} \mid x_t, \xi_t, \theta)\bigr] - \log \mathbb{E}_{p(\theta \mid z_{0:t})} \left[f(x_{t+1} \mid x_t, \xi_t, \theta)\right]. \label{eq:appendix_stage_reward}
\end{align}
We see that we have expectations with respect to the filtering posteriors of $\theta$ at times $t$ and $t+1$. At line \ref{line:smc2-reweight} of Algorithm~\ref{alg:i-o-smc2}, we have a trajectory $z_{0:t+1}^n$ and a particle representation $\sum_{m=1}^M W_{t, \theta}^{mn} \, \delta_{\theta_t^{mn}}(\theta) \approx p(\theta \mid z_{0:t}^n)$. The second term of the reward function can be estimated as
\begin{equation*}
    \log \mathbb{E}_{p(\theta \mid z_{0:t}^n)} \left[ f(x_{t+1}^n \mid x_t^n, \xi_t^n, \theta) \right] \approx \log \sum_{m=1}^M W_{t, \theta}^{mn} \, f(x_{t+1}^n \mid x_t^n, \xi_t^n, \theta_t^{mn}).
\end{equation*}
Now, to compute the first term, we need to approximate $p(\theta \mid z_{0:t+1}^n)$. For this we perform the reweighting step of IBIS~(line \ref{line:reweight-ibis} from Algorithm~\ref{alg:ibis}) to get updated weights
\begin{align*}
    W_{t+1, \theta}^{mn} &\propto W_{t, \theta}^{mn} \, p_\phi(z_{t+1}^n \mid z_{0:t}^n, \theta_t^{mn}) \\
    &= W_{t, \theta}^{mn} \, f(x_{t+1}^n \mid x_t^n, \xi_t^n, \theta_t^{mn}) \, \pi_\phi(\xi_t^n \mid z_{0:t}^n) \\
    &\propto W_{t, \theta}^{mn} \, f(x_{t+1}^n \mid x_t^n, \xi_t^n, \theta_t^{mn}).
\end{align*}
The distribution $\sum_{m=1}^M W_{t+1, \theta}^{mn} \, \delta_{\theta_t^{mn}}(\theta)$ now approximates the posterior $p(\theta \mid z_{0:t+1}^n)$ as required. We choose not to perform the resample-move step here so as not to introduce additional variance. The first term of~\eqref{eq:appendix_stage_reward} can now be approximated as
\begin{equation*}
    \mathbb{E}_{p(\theta \mid z_{0:t+1}^n)} \bigl[\log f(x_{t+1}^n \mid x_t^n, \xi_t^n, \theta)\bigr] \approx \sum_{m=1}^M W_{t+1, \theta}^{mn} \, \log f(x_{t+1}^n \mid x_t^n, \xi_t^n, \theta_t^{mn}).
\end{equation*}
The entire reweighting procedure is outlined in Algorithm~\ref{alg:reweight-function}.

\subsection{Choice of the Markov Kernel for IBIS}\label{sec:ibis_kernel}

For the Markov kernel $Q_t$ in IBIS~(Algorithm~\ref{alg:ibis}), we follow the choice in \citet{chopin2013smc} and use a Metropolis-Hastings kernel~\citep{metropolis1953equation, hastings1970montecarlo}. If the prior is Gaussian, we use a Gaussian random walk proposal
\begin{equation}\label{eq:random-walk-proposal}
    \tilde{\theta}^m \mid \theta^m \sim \mathcal{N}(\theta^m, c \, \hat{\Sigma}),
\end{equation}
where
\begin{equation}
    \hat{\Sigma} = \frac{1}{\sum_{m=1}^M w^m} \sum_{m=1}^M w^m (\theta^m - \hat{\mu}) (\theta^m - \hat{\mu})^{\top}, \quad \hat{\mu} = \frac{1}{\sum_{m=1}^M w^m} \sum_{m=1}^M w^m \theta^m,
\end{equation}
and $c \in \mathbb{R}_{> 0}$ is a constant that can be tuned to achieve a desired acceptance ratio.
For log-normal priors, we use a similar random walk proposal
\begin{equation}
    \tilde{\theta}^m \mid \theta^m \sim \mathrm{LogNormal}(\theta^m, c \, \hat{\Sigma}).
\end{equation}
We found that the proposal in~\eqref{eq:random-walk-proposal} worked better empirically compared to the proposal $\tilde{\theta}^m \mid \theta^m \sim N(\hat{\mu}, \hat{\Sigma})$ suggested in \citet{chopin2002sequential}, or a proposal which does not use the sample covariance, $\tilde{\theta}^m \mid \theta^m \sim N(\theta^m, cI)$.
In our evaluation, we perform multiple move steps per IBIS step to get a richer representation of samples.

\subsection{Inside-Out SMC\textsuperscript{2} with Conjugate Prior-Likelihood Pairs}\label{sec:conditionally_linear}
\begin{algorithm}[ht]
    \caption{Inside-Out SMC\textsuperscript{2} (Exact)}
    \label{alg:iosmc_exact}
    \begin{algorithmic}[1]
        \NOTATION Any operation with superscript $n$ is to be understood as performed for all $n = 1, \dots, N$.
        \STATE Sample $z_0^n \sim p(\cdot)$.
        \STATE Sample $z_1^n \sim p(\cdot \mid z_0^n)$ and initialize the state history $z_{0:1}^n \leftarrow (z_0^n, z_1^n)$.
        \STATE Compute and normalize the weights $W_z^n \propto g_1(z_{0:1}^n)$.
        \FOR{$t \gets 1, \dots, T - 1$}
            \STATE Sample $b_t^n \sim \mathcal{M}(W_z^{1:N})$.
            \STATE Compute the $\theta$ posterior $p(\theta \mid z_{0:t}^{b_t^n})$.
            \STATE Sample
                $z_{t+1}^n \sim p(\cdot \mid z_{0:t}^{b_t^n})$
                and append to state history
                $z_{0:t+1}^n \gets (z_{0:t}^{b_t^n}, z_{t+1}^n)$.
            \STATE Compute and normalize the weights $W_z^n \propto g_t(z_{0:t}^n)$.
        \ENDFOR
        \STATE \textbf{return} $\{z_{0:T}^n, W_z^n\}_{n=1}^N$.
    \end{algorithmic}
\end{algorithm}

Let $\theta \in \mathbb{R}^{d_\theta}$, $x_t \in \mathbb{R}^{d_x}$ and $\xi_t \in \mathbb{R}^{d_\xi}$. Let us consider conditionally linear, Gaussian transition dynamics for $x$,
\begin{equation}
    f(x_{t+1} \mid x_t, \xi_t, \theta) = \mathcal{N}(x_{t+1} \mid H(x_t, \xi_t)\,\theta, \Sigma(x_t, \xi_t)),
\end{equation}
where $H$ is a map $\mathbb{R}^{d_x + d_\xi} \mapsto \mathbb{R}^{d_x \times d_\theta}$ and $\Sigma:\mathbb{R}^{d_x + d_\xi} \to \mathbb{R}^{d_x \times d_x}$ maps to positive definite matrices. Let us also assume that at time $t$, the filtered posterior of $\theta$ is Gaussian with mean $m_t$ and covariance matrix $P_t$,
\begin{equation}
    p(\theta \mid z_{0:t}) = \mathcal{N}(\theta \mid m_t, P_t).
\end{equation}
Then, using basic identities of the multivariate normal distribution, the marginal density is
\begin{subequations}
    \begin{align}
        p(x_{t+1} \mid z_{0:t}, \xi_t) &= \int f(x_{t+1} \mid x_t, \xi_t, \theta) \, p(\theta \mid z_{0:t}) \, \dd\theta \\
        &= \mathcal{N}(x_{t+1} \mid Hm_t, HP_tH^T + \Sigma),
    \end{align}
\end{subequations}
where the functional dependence of $H$ and $\Sigma$ on $(x_t, \xi_t)$ has been hidden for conciseness. Furthermore, upon observing the next augmented state $z_{t+1} = (x_{t+1}, \xi_t)$, the filtered posterior of $\theta$ can be updated using Bayes' rule:
\begin{align*}
    p(\theta \mid z_{0:t+1}) &= \frac{p_\phi(x_{t+1}, \xi_t \mid z_{0:t}, \theta) \, p(\theta \mid z_{0:t})}{p_\phi(x_{t+1}, \xi_t \mid z_{0:t})} \\
    &= \frac{f(x_{t+1} \mid x_t, \xi_t, \theta) \, \pi_\phi(\xi_t \mid z_{0:t}) \, p(\theta \mid z_{0:t})}{p(x_{t+1} \mid z_{0:t}, \xi_t) \, \pi_\phi(\xi_t \mid z_{0:t})} \\
    &= \frac{f(x_{t+1} \mid x_t, \xi_t, \theta) \, p(\theta \mid z_{0:t})}{p(x_{t+1} \mid z_{0:t}, \xi_t)} \\
    &= \mathcal{N}(\theta \mid m_{t+1}, P_{t+1}),
\end{align*}
where
\begin{equation}
    m_{t+1} = m_t + G (x_{t+1} - Hm_t), \quad P_{t+1} = P_t - G H P_t, \quad G = P_t H^{\top}(H P_t H^{\top} + \Sigma)^{-1}.
\end{equation}
Thus, we see that for a conditionally linear model with Gaussian priors and likelihoods, we can compute the marginal density and the $\theta$ posterior in closed form. The same holds for any conjugate prior-likelihood pair. Consequently, we can compute the stage reward in~\eqref{eq:stage_reward} and hence the potential function in closed form, and the resulting version of the IO-SMC\textsuperscript{2} algorithm that does not use the inner particle filter is given in Algorithm~\ref{alg:iosmc_exact}.

\subsection{Conditional SMC}\label{sec:csmc}
In Section~\ref{sec:target_distribution}, we saw that IO-SMC\textsuperscript{2} is a nested particle filter that targets the distribution $\Gamma_T^M$. In this section, we construct a conditional version of our algorithm that keeps $\Gamma_T^M$ invariant. The basic idea behind CSMC is that given a \emph{reference trajectory} from the target distribution, at each time step in the forward pass, we sample $N-1$ samples conditionally on the reference particle surviving the resampling step~\citep{andrieu2010particle}. 
The CSMC kernel for $\Gamma_t^M$ is outlined in Algorithm~\ref{alg:csmc}, where the potential function estimates $g_t^n$ are computed as detailed in Algorithm~\ref{alg:reweight-function}.
\begin{algorithm}[t]
  \caption{Conditional Inside-Out SMC\textsuperscript{2} kernel}
  \label{alg:csmc}
  \begin{algorithmic}[1]
    \INPUT Reference trajectory $\{z_{0:T}, \{W_{t,\theta}^{\bullet}, \theta_t^{\bullet}\}_{t=0}^{T-1}\}$.
    \OUTPUT New reference trajectory $\{z_{0:T}^*, \{W_{t,\theta}^{\bullet *}, \theta_t^{\bullet *}\}_{t=0}^{T-1}\}$.
    \STATE Set $z_0^1 \gets z_0$, $\theta_0^{\bullet 1} \gets \theta_0^\bullet$ and $W_{0, \theta}^{\bullet 1} \leftarrow W_{0, \theta}^{\bullet}$.
    \FOR{$n = 2, \dots, N$}
        \STATE Sample $z_0^n \sim p(z_0)$, $\theta_0^{\bullet n} \sim p(\theta)$ and set $W_{0, \theta}^{\bullet n} \leftarrow 1/M$.
    \ENDFOR
    \STATE Set $z_{0:1}^1 \leftarrow z_{0:1}$.
    \FOR{$n = 2, \dots, N$}
        \STATE Sample $z_1^n \sim \hat{p}_\phi(\cdot \mid z_0^n)$ and set $z_{0:1}^n \leftarrow (z_0^n, z_1^n)$.
    \ENDFOR
  \STATE Compute and normalize the weights $W_z^n \propto g_1^n$  for all $n = 1, \dots, N$.
    \FOR{$t \gets 1, \dots, T - 1$}
        \STATE Set $z_{t+1}^1 \gets z_{t+1}$, $\theta_t^{\bullet 1} \gets \theta_t^\bullet$, $W_{t,\theta}^{\bullet 1} \gets W_{t,\theta}^{\bullet}$, and $z_{0:t+1}^1 \gets z_{0:t+1}$.
        \FOR{$n = 2, \dots, N$}
            \STATE Sample $b_t^n \sim \mathcal{M}(W_z^{1:N})$.
            \STATE $\theta_t^{\bullet n}, W^{\bullet n}_{t, \theta}, \gets \textsc{Ibis\_Step}(z_{0:t}^{b_t^n}, \theta_{t-1}^{\bullet b_t^n}, W^{\bullet b_t^n}_{t - 1, \theta})$
            \STATE Sample
                $z_{t+1}^n \sim \hat{p}_\phi(\cdot \mid z_{0:t}^{b_t^n}),$
                and append to state history
                $z_{0:t+1}^n \gets [z_{0:t}^{b_t^n}, z_{t+1}^n]$.
        \ENDFOR
        \STATE Compute and normalize the weights $W_z^n \propto g_t^n$ for all $n = 1, \dots, N$.
    \ENDFOR
    \STATE Sample an index $j \sim \mathcal{M}(W_z^{1:N})$.
    \STATE \textbf{return} $\{z_{0:T}^j, \{W_{t,\theta}^{\bullet j}, \theta_t^{\bullet j}\}_{t=0}^{T-1}\}$.
  \end{algorithmic}
\end{algorithm}

While Algorithm~\ref{alg:csmc} may look more complicated than ``classical'' CSMC algorithms~\citep{andrieu2010particle}, its complexity may be abstracted away by remembering that a `particle' object is, in our case, an instance of the approximate inner distribution $\Gamma_t^M$, which is associated with its weights, particles and ancestors, noting that the ancestors do not appear in the computation of Algorithm~\ref{alg:reweight-function} and are therefore omitted.

\section{Experimental Details}\label{sec:exp_details}
\subsection{Network Architectures and Hyperparameters}
We use the same network architecture for all amortized policies in the evaluation. The architecture of our design policy network is similar to that in iDAD~\citep{ivanova2021implicit}, with an encoder network transforming the augmented state sequences into a stacked representation $\{R(z_s)\}_{s=0}^t$ before passing it to the recurrent layers. The encoder networks for the augmented states are given in Table~\ref{tab:policy-arch} and Table~\ref{tab:lstm-arch}. For training, we used the Adam optimizer~\citep{kingma2014adam}. Our policies were trained on a single 9\textsuperscript{th} Gen Intel Core i9 processor, while iDAD policies were trained on an Nvidia A100 GPU using the authors' implementation~\citep{ivanova2021implicit}. The hyperparameters used to train iDAD are listed in Table~\ref{tab:hyperparams-idad}, and are common to all experiments.

All IO-SMC\textsuperscript{2} policies were trained with an additional slew rate penalty on the designs. We noticed that this detail promoted smoother design trajectories, that facilitated the amortization of the recurrent policy.

\newpage

\begin{table}[t]
    \caption{The encoder architecture.}
    \vspace{1em}
    \centering
    \begin{tabular}{cccc}
        \toprule
        Layer & Description & Size & Activation \\
        \midrule
        Input & Augmented state $z$ & dim($z$) & - \\
        Hidden layer 1 & Dense & 256 & ReLU \\
        Hidden layer 2 & Dense & 256 & ReLU \\
        Output & Dense & 64 & - \\
        \bottomrule
    \end{tabular}
    \label{tab:policy-arch}
    \vskip -0.05in
\end{table}
\begin{table}[h]
    \caption{The recurrent network architecture.}
    \vspace{1em}
    \centering
    \begin{tabular}{cccc}
        \toprule
        Layer & Description & Size & Activation \\
        \midrule
        Input & $\{R(z_s)\}_{s=0}^t$ & $64 \cdot (t+1)$ & - \\
        Hidden layer 1 & LSTM / GRU & 64 & - \\
        Hidden layer 2 & LSTM / GRU & 64 & - \\
        Hidden layer 3 & Dense & 256 & ReLU \\
        Hidden layer 4 & Dense & 256 & ReLU \\
        Output & Designs $\xi$ & dim($\xi$) & - \\
        \bottomrule
    \end{tabular}
    \label{tab:lstm-arch}
    \vskip -0.15in
\end{table}
\begin{table}[h]
    \caption{Hyperparameters for iDAD.}
    \label{tab:hyperparams-idad}
    \vspace{1em}
    \centering
    \begin{tabular}{lc}
        \toprule
        Hyperparameter & iDAD \\
        \midrule
        Batch size & $512$ \\
        Number of contrastive samples & $16383$ \\
        Number of gradient steps & $10000$ \\
        Learning rate (LR) & $5 \times 10^{-4}$ \\
        LR annealing parameter & $0.96$ \\
        LR annealing frequency (if no improvement) & $400$ \\
        \bottomrule
    \end{tabular}
\end{table}

\subsection{Stochastic Pendulum Experiment}
We consider two different dynamics for the compound pendulum, one conditionally linear in the parameters and another that is fully nonlinear. The following specifications are similar in both settings. 

Let $x_t = [q_t, \dot{q}_t]^{\top}$ denote the state of the pendulum, with $q_t$ being the angle from the vertical and $\dot{q}_t$ the angular velocity. The parameters of interest are $(m, l)$, the mass and length of the pendulum, while $g=9.81$ and $d=0.1$ are the gravitational acceleration and damping constants. The design, $\xi_t \in [-1, 1]$, is the torque applied to the pendulum. We discretize the respective SDEs that describe the dynamics using Euler-Maruyama with a step size $\dd t = 0.05$ and consider a horizon of $T = 50$ experiments. The initial state is fixed to $x_0 = [0, 0]^{\top}$.

\subsubsection{Conditionally Linear formulation}\label{app:pendulum-linear}
In this setting, we transform the non-linear pendulum equations to obtain a conditional linear dependency on the parameters, similar to \citet{belusov2019belief}. The resulting parameter vector is $\theta = \displaystyle \left[\frac{3g}{2l}, \frac{3d}{ml^2}, \frac{3}{ml^2} \right]^{\top}$. The dynamics is described by the following Ito SDE~\citep[][Chapter 3]{sarkka2019applied}
\begin{equation}
    \dd x_t = h(x_t, \xi_t)^{\top} \, \theta \, \dd t + L \, \dd \beta,
\end{equation}
with a drift term $h(x_t, \xi_t) = [-\sin(q), -\dot{q}, \xi_t]^{\top}$, diffusion term $L = [0, 0.1]^{\top}$ and Brownian motion $\beta$.

To maintain conjugacy, we assume a Gaussian prior 
\begin{equation*}
    p(\theta) = \textrm{Normal}\left( \begin{bmatrix}
        14.7 \\ 0 \\ 3.0
    \end{bmatrix}, \begin{bmatrix}
        0.1 & 0 & 0 \\
        0 & 0.01 & 0 \\
        0 & 0 & 0.1
    \end{bmatrix} \right).
\end{equation*}
The remaining hyperparameters are listed in Table~\ref{tab:hyperparams-pendulum-linear}.
\begin{table}[h!]
    \caption{Hyperparameters for the conditionally linear pendulum experiment.}
    \label{tab:hyperparams-pendulum-linear}
    \vspace{1em}
    \centering
    \begin{tabular}{ccc}
        \toprule
        Hyperparameter & IO-SMC\textsuperscript{2} & IO-SMC\textsuperscript{2} (Exact) \\
        \midrule
        N & $256$ & $256$ \\
        M & $128$ & - \\
        Tempering $(\eta)$ & $1.0$ & $1.0$ \\
        Slew rate penalty & $0.1$ & $0.1$ \\
        IBIS moves & $3$ & - \\
        Learning rate & $10^{-3}$ & $10^{-3}$ \\
        Training iterations & $25$ & $25$ \\
        \bottomrule
    \end{tabular}
\end{table}

\subsubsection{Nonlinear formulation}\label{app:pendulum-non-linear}
The unknown parameters are $\theta = (m, l)$. The dynamics is described by the SDE
\begin{equation*}
    \dd x_t = h(x_t, \xi_t, \theta)^{\top} \, \dd t + L \, \dd \beta,
\end{equation*}
where $h(x_t, \xi_t, \theta) = [\dot{q}_t, \Ddot{q}_t]^{\top}$ and
\begin{equation*}
    \Ddot{q}_t = -\frac{3g}{2l}\sin{q_t} + \frac{(\xi_t - d \, \dot{q}_t)}{ml^2},
\end{equation*}
and $L = [0, 0.1]^{\top}$. The prior is a log-normal distribution
\begin{equation*}
    p(\theta) = \textrm{LogNormal}\left( \begin{bmatrix}
        0 \\ 0
    \end{bmatrix}, \begin{bmatrix}
        0.01 & 0 \\ 0 & 0.01
    \end{bmatrix} \right).
\end{equation*}
The remaining hyperparameters for IO-SMC\textsuperscript{2} are listed in Table~\ref{tab:exp-hyperparams}, and the training progression is depicted in Figure~\ref{fig:nonlinear-pendulum-eig-training}.
\vspace{-1em}
\begin{table}[ht]
    \caption{Hyperparameters for the non-linear pendulum, stochastic cart-pole, and stochastic dual-link experiments.}
    \label{tab:exp-hyperparams}
    \vspace{1em}
    \centering
    \begin{tabular}{cccc}
        \toprule
        Hyperparameter & Nonlinear pendulum & Stochastic cart-pole & Stochastic dual-link \\
        \midrule
        N & $256$ & $256$ & $256$ \\
        M & $128$ & $1024$ & $128$  \\
        Tempering ($\eta$) & $1.0$ & $0.25$ & $0.25$  \\
        Slew rate penalty & $0.2$ & $0.1$ & $0.1$  \\
        IBIS moves & $3$ & $3$ & $3$  \\
        Learning rate & $5 \times 10^{-4}$ & $5 \times 10^{-4}$ & $5 \times 10^{-4}$ \\
        Training iterations & $25$ & $15$ & $25$  \\
        \bottomrule
    \end{tabular}
\end{table}
\vspace{-0.5em}

\subsection{Stochastic Cart-Pole Experiment}\label{app:cart-pole}
The cart-pole is described by a state $x_t = [s_t, q_t, \dot{s}_t, \dot{q}_t]^{\top}$, where $s_t$ and $\dot{s}_t$ are the position and velocity of the cart, and $q_t$ and $\dot{q}_t$ are the position and velocity of the pole. The unknowns are $\theta = (l, m_p, m_c)$, the length and mass of the pendulum and the mass of the cart, respectively. The design, $\xi_t \in [-5, 5]$, is the force applied to the cart. The corresponding SDE is
\begin{equation*}
    \dd x_t = h(x_t, \xi_t, \theta) \, \dd t + L \, \dd \beta,
\end{equation*}
where $h(x_t, \xi_t, \theta) = [\dot{s}_t, \dot{q}_t, \Ddot{s}_t, \Ddot{q}_t]^{\top}$ and 
\begin{align*}
    \Ddot{s}_t &= \frac{1}{m_c + m_p \sin^2{\!q_t}} \bigl[ \xi_t + m_p \sin{q_t} (l \dot{q}_t^2 + g\cos{q_t}) - (k_1 s_t + d_1 \dot{s_t}) - (k_2 q_t + d_2 \dot{q_t}) \cos{q_t} / l \bigr], \\
    \Ddot{q}_t &= \begin{multlined}
        \frac{1}{l(m_c + m_p \sin^2{\!q_t})} \bigl[ -\xi_t \cos{q_t} - m_p l \dot{q}_t^2 \cos{q_t}\sin{q_t} - (m_c + m_p)g\sin{q_t} \\ - (k_1 s_t + d_1 \dot{s_t}) \cos{q_t} - (k_2 q_t + d_2 \dot{q_t}) \cos^2{\!q_t} / l \bigr].
    \end{multlined}
\end{align*}
$(k_1, k_2, d_1, d_2) = 0.01$ are the linear and torsional stiffness and damping constants and $g=9.81$. The diffusion term is $L = [0, 0, 0.1, 0]^{\top}$. We discretize the SDE with a step size $\dd t = 0.05$ and consider a horizon of $T = 50$ experiments. The initial state is fixed at $x_0 = [0, 0, 0, 0]^{\top}$. The prior for $\theta$ is log-normal
\begin{equation*}
    p(\theta) = \textrm{LogNormal}\left( \begin{bmatrix}
        0 \\ 0 \\ 0
    \end{bmatrix}, \begin{bmatrix}
        0.01 & 0 & 0 \\ 0 & 0.01 & 0 \\ 0 & 0 & 0.01
    \end{bmatrix} \right).
\end{equation*}
The remaining hyperparameters used for this experiment are given in Table~\ref{tab:exp-hyperparams}. In Table~\ref{tab:eig-spce-comparison}, we compare the sample efficiency of our method against the sPCE bound as an estimator of the EIG. For a trained policy and a fixed number of trajectory samples, we vary the number of parameter particles~(contrastive samples). Our method reaches the asymptotic value of approximately $21$ at $1024$ particles, while the sPCE does not and fully saturates its sample-size dependent upper bound~\citep[Appendix A]{foster2021deep} for all the experiments we ran. Finally, the training progression of the algorithm is depicted in Figure~\ref{fig:cart-pole-eig-training}.
\begin{table}[htbp]
    \caption{EIG estimates and sPCE lower bounds for a trained policy on the stochastic cart-pole experiment for different numbers of $\theta$ particles. We report the mean and standard deviation over 25 random seeds.}
    \label{tab:eig-spce-comparison}
    \vskip 0.25cm
    \centering
    \begin{tabular}{cccc}
        \toprule
        No. of $\theta$ particles ($M$) & EIG Estimate~\eqref{eq:eig-estimator} & sPCE & sPCE theoretical limit~($\log{(M + 1})$) \\
        \midrule
        $64$   & $14.52 \pm 0.71$ & $4.17 \pm 0.00$ & $4.17$ \\
        $128$  & $17.38 \pm 0.93$ & $4.86 \pm 0.00$ & $4.86$ \\
        $256$  & $19.20 \pm 0.68$ & $5.55 \pm 0.00$ & $5.55$ \\
        $512$  & $20.67 \pm 0.82$ & $6.24 \pm 0.00$ & $6.24$ \\
        $1024$ & $21.29 \pm 0.63$ & $6.93 \pm 0.00$ & $6.93$ \\
        $2048$ & $21.60 \pm 0.37$ & $7.62 \pm 0.00$ & $7.62$ \\
        $4096$ & $21.87 \pm 0.56$ & $8.32 \pm 0.00$ & $8.32$ \\
        $8192$ & $21.77 \pm 0.56$ & $9.01 \pm 0.00$ & $9.01$ \\
        \bottomrule
    \end{tabular}
\end{table}

\subsection{Stochastic Double-Link Experiment}\label{app:double-pendulum}
The double-link system is described by a state vector $x = [q, \dot{q}]^{\top} = [q_1, q_2, \dot{q}_1, \dot{q}_2]^{\top}$, where $q_1$ and $q_2$ are the angles of the two joints~\citep[Appendix B]{underactuated}. The system is doubly actuated, with $\xi = [\xi_1, \xi_2]^\top$, $\xi_1 \in [-4, 4]$ and $\xi_2 \in [-2, 2]$. The parameters of interest are $\theta = (m_1, m_2, l_1, l_2)$, the masses and lengths of the two joints, respectively. We define
\begin{align}
    M(q) &= \begin{bmatrix}
        (m_1 + m_2) l_1^2 + m_2 l_2^2 + 2 m_2 l_1 l_2 \cos{q}_2 & m_2 l_2^2 + m_2 l_1 l_2 \cos{q}_2 \\
        m_2 l_2^2 + m_2 l_1 l_2 \cos{q}_2 & m_2 l_2^2
    \end{bmatrix}, \\
    C(q, \dot{q}) &= \begin{bmatrix}
        0 & -m_2 l_1 l_2 (2 \dot{q}_1 + \dot{q}_2) \sin{q}_2 \\
        \frac{1}{2}m_2 l_1 l_2 (2 \dot{q}_1 + \dot{q}_2) \sin{q}_2 & -\frac{1}{2} m_2 l_1 l_2 \dot{q}_1 \sin{q}_2
    \end{bmatrix}, \\
    \tau_g(q) &= -g \cdot \begin{bmatrix}
        (m_1 + m_2) l_1 \sin(q_1) + m_2 l_2 \sin(q_1 + q_2) \\ m_2 l_2 \sin(q_1 + q_2)
    \end{bmatrix},
\end{align}
where $g=9.81$. The system dynamics are then given by the equations
\begin{align}
    \dd q &= \dot{q} \, \dd t, \\
    \dd \dot{q} &= M^{-1}(q) [\tau_g(q) + \xi - C(q, \dot{q}) \dot{q}] \, \dd t + L \dd \beta,
\end{align}
where $L = \begin{bmatrix}
    0.1 & 0 \\ 0 & 0.1
\end{bmatrix}$
and $\beta = [\beta_1, \beta_2]^{\top}$, with $\beta_1$ and $\beta_2$ being two independent Brownian motions. We assume a log-normal prior of the form
\begin{equation*}
    p(\theta) = \textrm{LogNormal}\left( \begin{bmatrix}
        0 \\ 0 \\ 0 \\ 0
    \end{bmatrix}, \begin{bmatrix}
        0.01 & 0 & 0 & 0 \\
        0 & 0.01 & 0 & 0 \\
        0 & 0 & 0.01 & 0 \\
        0 & 0 & 0 & 0.01
    \end{bmatrix} \right).
\end{equation*}
The remaining hyperparameters for IO-SMC\textsuperscript{2} are listed in Table~\ref{tab:exp-hyperparams}, and the training progression is depicted in Figure~\ref{fig:double-pendulum-eig-training}.

\newpage
\begin{figure}[t]
    \centering
    \vspace{-0.2cm}
    \begin{tikzpicture}

\definecolor{forestgreen4416044}{RGB}{0,77,64}
\definecolor{steelblue31119180}{RGB}{30,136,229}

\begin{axis}[
    width=7cm,
    height=6cm,
    grid=both,
    minor tick num=3,
    grid style={line width=.1pt, draw=gray!10},
    major grid style={line width=.1pt, draw=gray!50},
    xlabel=Training Epochs,
    ylabel=EIG Estimate,
    legend style={
        nodes={scale=0.85, transform shape},
        at={(0.95,0.05)},
        anchor=south east
    },
    legend cell align={left},
    ymin = -0.25,
    ymax = 4.0
]
    \addplot [
        black,
        thick,
        mark=*,
        mark size=2,
        error bars/.cd,
            y dir=both,y explicit,
    ] coordinates {
        (1.0, 0.15314773344226512) +- (0.0, 0.019699377515776374)
        (2.0, 0.22331419967381122) +- (0.0, 0.059576540412495826)
        (3.0, 0.4454209332314236) +- (0.0, 0.2052344545405983)
        (4.0, 0.7698848094853378) +- (0.0, 0.33996818563743)
        (5.0, 1.1779555255161231) +- (0.0, 0.5216829148697699)
        (6.0, 1.5934098495480595) +- (0.0, 0.5799072330224897)
        (7.0, 1.978225932407009) +- (0.0, 0.5587235870357407)
        (8.0, 2.338701129779168) +- (0.0, 0.4715956114462798)
        (9.0, 2.6712102111211764) +- (0.0, 0.4022408077224845)
        (10.0, 2.911200954005488) +- (0.0, 0.35340984152487226)
        (11.0, 3.079623530668574) +- (0.0, 0.2934770977027224)
        (12.0, 3.1950154414749408) +- (0.0, 0.25218359588291683)
        (13.0, 3.288476973530609) +- (0.0, 0.21793224545436993)
        (14.0, 3.3470887053662555) +- (0.0, 0.1988165368952456)
        (15.0, 3.4126056057926153) +- (0.0, 0.1576931569387095)
        (16.0, 3.4959092821154507) +- (0.0, 0.14876153992238728)
        (17.0, 3.5107310815212363) +- (0.0, 0.10192916485247716)
        (18.0, 3.5441931530960726) +- (0.0, 0.09457033048998399)
        (19.0, 3.5605734787760586) +- (0.0, 0.09463049895508163)
        (20.0, 3.5828554823322314) +- (0.0, 0.08106451159487116)
        (21.0, 3.608791527138221) +- (0.0, 0.07024261268499041)
        (22.0, 3.6061787234318485) +- (0.0, 0.06699892012705205)
        (23.0, 3.604602051485013) +- (0.0, 0.07256547486622694)
        (24.0, 3.622635997693572) +- (0.0, 0.08897617213317162)
        (25.0, 3.640738972433983) +- (0.0, 0.06565269742366543)
    };
    \addlegendentry{IO-SMC\textsuperscript{2}}
\end{axis}
\end{tikzpicture} 
    \vspace{-0.25cm}
    \caption{Training progression of the IO-SMC\textsuperscript{2} policy on the non-linear stochastic pendulum experiment. At every epoch, we evaluate the EIG estimate using the mean policy. We report the mean and standard deviation of the EIG estimate over $25$ unique training seeds.}
    \label{fig:nonlinear-pendulum-eig-training}
    \vspace{-0.25cm}
\end{figure}
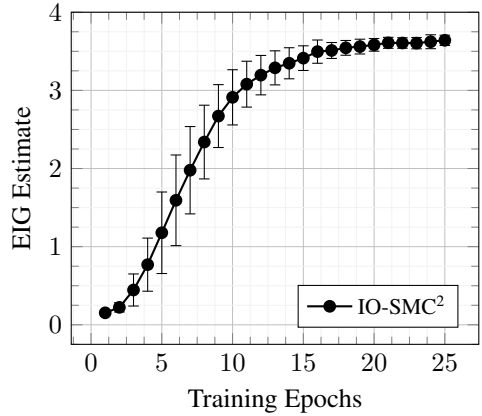
\begin{figure}[t]
    \centering
    \begin{tikzpicture}

\begin{axis}[
    width=7cm,
    height=6cm,
    grid=both,
    minor tick num=3,
    grid style={line width=.1pt, draw=gray!10},
    major grid style={line width=.1pt, draw=gray!50},
    xlabel=Training Epochs,
    ylabel=EIG Estimate,
    legend style={
        nodes={scale=0.85, transform shape},
        at={(0.95,0.05)},
        anchor=south east
    },
    legend cell align={left},
]
    \addplot [
        black,
        thick,
        mark=*,
        mark size=2,
        error bars/.cd,
            y dir=both,y explicit,
    ] coordinates {
        (1.0, 7.491134781701749) +- (0.0, 1.2368362903372183)
        (2.0, 9.512416877784007) +- (0.0, 2.3463653821749886)
        (3.0, 12.697692129825386) +- (0.0, 3.1620643020593224)
        (4.0, 15.84311275037923) +- (0.0, 2.8025896873333997)
        (5.0, 17.771242749533904) +- (0.0, 1.991397885022261)
        (6.0, 19.020893035436305) +- (0.0, 1.311400427022583)
        (7.0, 19.66628501344076) +- (0.0, 0.9111540995317713)
        (8.0, 20.01973735785842) +- (0.0, 0.8331555698519646)
        (9.0, 20.126336042248937) +- (0.0, 0.8493262225472693)
        (10.0, 20.11459951025506) +- (0.0, 0.7894778586793879)
        (11.0, 20.116340221270725) +- (0.0, 0.8256668530318919)
        (12.0, 20.199440080614927) +- (0.0, 0.7949264684337433)
        (13.0, 20.176606031266587) +- (0.0, 0.795454728168891)
        (14.0, 20.12648820288104) +- (0.0, 0.8131109520238227)
        (15.0, 20.158403354286143) +- (0.0, 0.7932433343287377)
    };
    \addlegendentry{IO-SMC\textsuperscript{2}}
\end{axis}
\end{tikzpicture}
    \vspace{-0.25cm}
    \caption{Training progression of the IO-SMC\textsuperscript{2} policy for the stochastic cart-pole experiment. At every epoch, we evaluate the EIG estimate using the mean policy. We report the mean and standard deviation of the EIG estimate over $25$ unique training seeds.}
    \label{fig:cart-pole-eig-training}
    \vspace{-0.25cm}
\end{figure}
\begin{figure}[t]
    \centering
    \begin{tikzpicture}

\begin{axis}[
    width=7cm,
    height=6cm,
    grid=both,
    minor tick num=3,
    grid style={line width=.1pt, draw=gray!10},
    major grid style={line width=.1pt, draw=gray!50},
    xlabel=Training Epochs,
    ylabel=EIG Estimate,
    legend style={
        nodes={scale=0.85, transform shape},
        at={(0.95,0.05)},
        anchor=south east
    },
    legend cell align={left},
]
    \addplot [
        black,
        thick,
        mark=*,
        mark size=2,
        error bars/.cd,
            y dir=both,y explicit,
    ] coordinates {
        (1.0,  1.3091088779962) +- (0.0, 0.4644504900984)
        (2.0,  1.4716060863128) +- (0.0, 0.6276555721804)
        (3.0,  1.6517638685160) +- (0.0, 0.6905817832728)
        (4.0,  1.9859028988859) +- (0.0, 0.8547289312624)
        (5.0,  2.5351828160088) +- (0.0, 1.0845042114358)
        (6.0,  3.2614604985451) +- (0.0, 1.4895967282974)
        (7.0,  4.1762813636960) +- (0.0, 1.8400856093791)
        (8.0,  5.3177545577495) +- (0.0, 2.2669994307635)
        (9.0,  6.3367156191676) +- (0.0, 2.3523765815343)
        (10.0, 7.2075733589245) +- (0.0, 2.2628284046999)
        (11.0, 7.8964588939329) +- (0.0, 2.0357121164272)
        (12.0, 8.6271793421528) +- (0.0, 1.6596768003818)
        (13.0, 9.2652227252071) +- (0.0, 1.2242745362892)
        (14.0, 9.5692777658673) +- (0.0, 1.0350245166163)
        (15.0, 9.8657382261507) +- (0.0, 0.9248310846987)
        (16.0, 10.114530978423) +- (0.0, 0.7911348586235)
        (17.0, 10.277726379508) +- (0.0, 0.7865548104753)
        (18.0, 10.355601217999) +- (0.0, 0.7084997054798)
        (19.0, 10.461403379979) +- (0.0, 0.6772925259732)
        (20.0, 10.519360346522) +- (0.0, 0.6383741500029)
        (21.0, 10.601281706779) +- (0.0, 0.6541958949355)
        (22.0, 10.685347617715) +- (0.0, 0.5843175244360)
        (23.0, 10.656776244029) +- (0.0, 0.6332142981074)
        (24.0, 10.760820446228) +- (0.0, 0.6007620506667)
        (25.0, 10.717302065123) +- (0.0, 0.5790462471762)
    };
    \addlegendentry{IO-SMC\textsuperscript{2}}
\end{axis}
\end{tikzpicture}
    \vspace{-0.25cm}
    \caption{Training progression of the IO-SMC\textsuperscript{2} policy for the stochastic double-link experiment. At every epoch, we evaluate the EIG estimate using the mean policy. We report the mean and standard deviation of the EIG estimate over $25$ unique training seeds.}
    \label{fig:double-pendulum-eig-training}
    \vspace{-0.25cm}
\end{figure}

\end{document}